\documentclass[dvipsnames,format=sigconf,anonymous=false,review=false]{acmart}

\usepackage{algorithm}
\usepackage{algorithmic}
\usepackage{utils}

\AtBeginDocument{%
  }

\copyrightyear{2026}
\acmYear{2026}
\setcopyright{cc}
\setcctype{by}
\acmConference[GECCO '26]{Genetic and Evolutionary Computation Conference}{July 13--17, 2026}{San Jose, Costa Rica}
\acmBooktitle{Genetic and Evolutionary Computation Conference (GECCO '26), July 13--17, 2026, San Jose, Costa Rica}
\acmDOI{10.1145/3795095.3805122}
\acmISBN{979-8-4007-2487-9/2026/07}

\begin{document}

\title[Convergence Analysis of Evolution Strategies for Mixed-Integer Optimization]{Convergence Analysis of Evolution Strategies\\for Mixed-Integer Optimization}


\author{Ryoki Hamano}
\email{hamano_ryoki_xa@cyberagent.co.jp}
\orcid{0000-0002-4425-1683}
\affiliation{%
  \institution{CyberAgent}
  \city{Shibuya}
  \state{Tokyo}
  \country{Japan}
  \postcode{150-0042}
}

\author{Kento Uchida}
\email{uchida-kento-fz@ynu.ac.jp}
\orcid{0000-0002-4179-6020}
\affiliation{%
  \institution{Yokohama National University}
  \city{Yokohama}
  \state{Kanagawa}
  \country{Japan}
  \postcode{240-8501}
}

\author{Shinichi Shirakawa}
\email{shirakawa-shinichi-bg@ynu.ac.jp}
\orcid{0000-0002-4659-6108}
\affiliation{%
  \institution{Yokohama National University}
  \city{Yokohama}
  \state{Kanagawa}
  \country{Japan}
  \postcode{240-8501}
}

\renewcommand{\shortauthors}{R. Hamano et al.}

\begin{abstract}
  Mixed-integer extensions of evolution strategies (ES) that discretize selected coordinates of sampled continuous vectors often impose a lower bound on the standard deviation of integer variables to prevent premature convergence.
While these methods show promising empirical results, this handling can slow the convergence of continuous variables, and its impact has lacked a clear theoretical account.
In this paper, we provide a convergence analysis of evolution strategies for mixed-integer optimization, inspired by the drift analysis of the (1+1)-ES in the continuous domain.
Specifically, we consider two (1+1)-ES variants for mixed-integer
domains: (1+1)-LB-ES, which introduces a lower bound on the standard deviation for integer variables, and (1+1)-LUB-ES, which combines both lower and upper bounds to enhance the convergence of the continuous variables.
Focusing on the optimization phase after the integer variables have been optimized, we rigorously analyze their convergence behavior on a benchmark function designed for mixed-integer domains.
Our results show that (1+1)-LB-ES can suffer from premature convergence when the number of integer variables is large, while (1+1)-LUB-ES achieves linear convergence under suitable parameter settings.
These findings provide theoretical insights into the impact of integer handling on convergence performance and guidance for the design of mixed-integer ES.

\end{abstract}



\begin{CCSXML}
<ccs2012>
   <concept>
       <concept_id>10002950.10003648.10003671</concept_id>
       <concept_desc>Mathematics of computing~Probabilistic algorithms</concept_desc>
       <concept_significance>500</concept_significance>
       </concept>
   <concept>
       <concept_id>10003752.10010070.10011796</concept_id>
       <concept_desc>Theory of computation~Theory of randomized search heuristics</concept_desc>
       <concept_significance>500</concept_significance>
       </concept>
 </ccs2012>
\end{CCSXML}

\ccsdesc[500]{Mathematics of computing~Probabilistic algorithms}
\ccsdesc[500]{Theory of computation~Theory of randomized search heuristics}

\keywords{mixed-integer black-box optimization, evolution strategies, linear convergence, premature convergence}


\maketitle

\section{Introduction}

Evolution strategies (ES)~\cite{Rechenberg:1973, Schwefel:1995, hansen_adapting_1996, Back:1996, Beyer-Schwefel:2002, Rudolph:2012, Back:2013, ES-Hansen:2015} represent a powerful approach for continuous black-box optimization.
Specifically, the covariance matrix adaptation evolution strategy (CMA-ES)~\cite{hansen_adapting_1996}, a state-of-the-art variant of ES, is known as a promising continuous black-box optimizer.
These ES-based algorithms have seen successful application in various industrial and AI-related domains~\cite{cmaes-application1,cmaes-application2,salimans2017es,cmaes-application3,cmaes-application4}.
In addition to their strong empirical results, the theoretical foundations of ES are well-established, particularly in comparison to other evolutionary algorithms in the continuous domain.

One powerful approach to analyzing ES is drift analysis in continuous spaces~\cite{akimoto:gecco:2018,morinaga:2019:foga,morinaga:2021:foga,morinaga:2024:tevc}.
In contrast to traditional Markov chain analysis, this method has successfully revealed how the problem-specific parameters, such as the number of dimensions, affect the optimization performance.
Akimoto~et~al.~\cite{akimoto:gecco:2018} investigate (1+1)-ES with the success-based step-size adaptation~\cite{Rechenberg:1973}, and show the expectation of the first hitting time of (1+1)-ES required to find an $\epsilon$-neighbor of the optimum is $\Theta(d \cdot \log( 1 / \epsilon ))$ on the $d$-dimensional sphere function.
This approach is extended to broader classes such as strongly convex and Lipschitz smooth functions~\cite{morinaga:2019:foga, morinaga:2021:foga, morinaga:2024:tevc}.

While ES is widely used in continuous optimization, they have also been extended to integer and mixed-integer domains.
Early work by Rudolph addressed ES on integer-valued search spaces~\cite{Rudolph:1994, Rudolph:2012}, with doubly geometric mutations motivated via maximum-entropy arguments in~\cite{Rudolph:1994}.
Subsequently, ES-based approaches were developed to jointly handle continuous, integer, and nominal (categorical) discrete variables~\cite{Emmerich:2000, Li:2013}.
In these studies, integer and discrete variables were handled through mutation operators designed specifically for those domains.


More recently, mixed-integer extensions of CMA-ES and related ES-based algorithms have been developed in which integer variables are generated by discretizing selected coordinates of sampled continuous vectors~\cite{hansen_cma-es_2011, CMAwM:2022, 1+1-CMA-ESwM:2023, MI-NES:2023, CMAwM-TELO:2024, LB+IC-CMA-ES:2024}.
This allows mixed-integer problems to be addressed while largely preserving the framework of a promising continuous optimizer.
However, if the standard deviation in the integer dimensions becomes too small relative to the integer intervals, the generated integers can be fixed to a single value.
To address this issue, these methods impose a lower bound on the standard deviation in these dimensions.
While these approaches have shown promising experimental results, the lower bound has a substantial impact on performance and requires careful tuning.
If this bound is too small, integer variables prematurely converge to suboptimal values, leading to the fixation of integers.
Conversely, if it is too large, even when integer variables reach optimal values, there remains a high probability of generating nearby integers, which can impair the efficient convergence of continuous variables.
Therefore, recommended lower-bound values have been discussed based on experimental and analytical evaluations~\cite{CMAwM:2022, LB+IC-CMA-ES:2024, CatCMAwM:2025}.
However, theoretical guarantees for this integer-handling approach have not been developed, and its impact on the convergence of continuous variables remains insufficiently understood.

In this study, we analyze evolution strategies with integer handling for mixed-integer optimization. We focus on the variants of (1+1)-ES with integer handling to theoretically investigate how integer handling affects the ES performance in a mixed-integer problem.
In particular, inspired by~\cite{LB+IC-CMA-ES:2024}, we consider two variants of (1+1)-ES for mixed-integer optimization: (1+1)-ES only with a lower bound $\sigLB$ of the standard deviation for the integer variable, termed (1+1)-LB-ES, and (1+1)-ES with both the lower bound $\sigLB$ and the upper bound introduced in \cite{CatCMAwM:2025}, termed (1+1)-LUB-ES.
To isolate the effect of integer handling on the convergence of continuous variables, we focus on the phase where the optimization of integer variables is complete.
To ease the difficulty of analysis, we develop \textsc{LexicoSphereInt}, on which the behavior of the evolution strategies is similar to that of the well-used benchmark function, \textsc{SphereInt}.
Specifically, on \textsc{LexicoSphereInt}, the ranking of two solutions $(\x,\z)$ and $(\x',\z')$, where $\x,\x' \in \mathbb{R}^{\dco}$ and $\z,\z' \in \mathbb{Z}^{\din}$, is determined by comparing the evaluation values $\|\z\|^2$ and $\|\z'\|^2$ for integer variables for the first time, then if they are equal, the evaluation values $\|\x\|^2$ and $\|\x'\|^2$ continuous variables are compared.

We analyze the behavior of (1+1)-LB-ES and (1+1)-LUB-ES starting with the optimal integer variable $\mm[0] = \mathbf{0}$ to investigate how the additional integer-handlings affect the optimization performance of the continuous variable $\m[t]$.
By applying the additive drift theorem, we show that the step-size $\sig$ of (1+1)-LB-ES decreases faster than the distance $\| \m \|$ between the mean vector and the optimal solution when the number $\din$ of integer variables is significantly large.
In addition, we theoretically show the premature convergence of (1+1)-LB-ES, i.e., there exists a constant value $C_\mathrm{lower} > 0$ that holds $\E\left[ \| \m[s \tau] \| \right] \geq C_\mathrm{lower} \cdot \| \m[0] \|$ for any $\tau \in \mathbb{N}$.
Moreover, in the analysis of (1+1)-LUB-ES, we prove that the (1+1)-LUB-ES converges linearly, i.e., the expected first hitting time of (1+1)-LUB-ES required to find an $\epsilon$-neighbor of the optimum is $\Theta(\dco \cdot \log( 1 / \epsilon ))$.
We also investigate the condition for the target success rate $1/s$ in the success-based step-size adaptation to realize the linear convergence depending on the setting of the lower bound $\sigLB$.
Numerical simulation reveals that the premature convergence of (1+1)-LB-ES occurs even with a moderate number of dimensions, while (1+1)-LUB-ES achieves a linear convergence rate with a suitable setting of $s$ satisfying the derived condition.




\section{Formulations}

\subsection{Notation}
We denote the indicator function for condition $C$ as $\mathbf{1}_{\{ C \}}$.
The $d$-dimensional identity matrix is denoted  $\mathbf{I}_d$.
A multivariate normal distribution is denoted $\mathcal{N}$.
The cumulative distribution function of the standard normal distribution and its inverse function are denoted as $\Phi$ and $\Phi^{-1}$, respectively.
The $i$-th element of a vector $\bv$ and $i$-th diagonal element of a matrix $\boldsymbol{A}$ are denoted $[\bv]_i$ and $\langle \boldsymbol{A} \rangle_i$, respectively.
The discretizing function $\Int [\cdot] : \R^{\din} \to \Z^{\din}$ is defined as follows. 
\begin{align}
    \left[ \Int [\bv] \right]_i = \left\lfloor [\bv]_i + \frac12 \right\rfloor \quad \text{for} \enspace i = 1, \ldots, \din
\end{align}
The symbols ${O}$, ${o}$, $\Theta$, and $\Omega$ represent the Landau notations. We denote $g(x) \in - {O}( h(x) )$ when there exist $x_0 > 0$ and $M > 0$ such that $g(x) \geq - M \cdot | h(x) |$ for all $x > x_0$.

\subsection{Algorithms}
We consider a simple evolution strategy for mixed-integer spaces, (1+1)-LB-ES with $1/s$-success rule, defined in Algorithm~\ref{alg:mixint-ES}.
The (1+1)-LB-ES aims to minimize the objective function $f:\R^{\dco} \times \Z^{\din} \to \R$ by mutating the elitist solution $(\m,\mm) \in \R^{\dco} \times \Z^{\din}$ to generate the next candidate solution $(\x_t, \z_t) \in \R^{\dco} \times \Z^{\din}$ as
\begin{align}
    \x_t = \m + \sig \boldsymbol{\xi}^{\mathrm{co}}_t \quad \text{and} \quad \z_t = \Int \left[ \mm + \sig \D \boldsymbol{\xi}^{\mathrm{in}}_t \right] \enspace,
\end{align}
where $\bxi_t^{\mathrm{co}} \sim \mcN(\boldsymbol{0}, \mathbf{I}_{\dco})$ and $\bxi_t^{\mathrm{in}} \sim \mcN(\boldsymbol{0}, \mathbf{I}_{\din})$.
\rev{The diagonal matrix $\D$ controls the coordinate-wise mutation scale in the integer block.}
For the sake of analysis, we introduce the notation $\preccurlyeq_f$ to denote the ranking of solutions on $f$, where $(\x, \z) \preccurlyeq_f (\x', \z')$ indicates $(\x', \z')$ is no worse than $(\x,\z)$ for given mixed continuous-integer solutions $(\x, \z), (\x', \z') \in \R^{\dco} \times \Z^{\din}$ (further detailed in Section~\ref{ssec:problems}).
This algorithm is based on (1+1)-ES, with discretization of continuous vectors and simple integer handling proposed in \cite{LB+IC-CMA-ES:2024}.
It can also be interpreted as a simplified version of (1+1)-CMA-ES with Margin ((1+1)-CMA-ESwM)~\cite{1+1-CMA-ESwM:2023}, which uses the essentially same integer handling as in \cite{LB+IC-CMA-ES:2024}\footnote{The (1+1)-CMA-ESwM employs a different approach when handling minimum and maximum integers in a search space consisting of finite sets of integers.}.

The parameter $\sigLB$ is the lower bound of the standard deviation for integer variables $\sig[t]\langle \D[t] \rangle_i$.
The larger $\sigLB$ is, the greater the probability $\Pr([\mm[t]]_i \neq [\z_t]_i)$ becomes, preventing mutated integers from being fixed to a single value.
This also implies that when $\mm[t]$ is optimal, a larger $\sigLB$ increases the likelihood of mutations to non-optimal integers, which has a negative effect on the convergence of continuous variables.
Therefore, the parameter $\sigLB$ requires careful adjustment considering the trade-off between handling of integer fixation and convergence performance.
In the (1+1)-CMA-ESwM, the parameter corresponding to $\sigLB$, referred to as \emph{margin} in \cite{1+1-CMA-ESwM:2023}, is recommended to be set such that $2\Phi\left(- \frac{1}{2\sigLB} \right) = \frac{1}{\dco + \din}$, which ensures $\Pr([\mm[t]]_i \neq [\z_t]_i) \geq \frac{1}{\dco + \din}$.

\citet{LB+IC-CMA-ES:2024} find the importance of {\it the successful integer mutation}, which is the mutation of the integer variables where the mutated solution is regarded as a superior solution.
In addition, \citet{CatCMAwM:2025} introduce an upper bound, in addition to the lower bound, on the standard deviation in the dimensions without the successful integer mutation.
Their study experimentally showed that this upper bound enhances the efficiency of convergence for continuous variables.
In this study, to investigate the effectiveness of such an approach, by incorporating the upper bound into the (1+1)-LB-ES for integers without the successful integer mutation as 
\begin{align}
    \sig[t+1] \langle \D[t+1] \rangle_i \leftarrow \max \left\{ \sigLB, \min \left\{ \sig[t+1] \langle \D[t] \rangle_i, \sig[t] \langle \D[t] \rangle_i \right\} \right\} \,,
\end{align} 
we develop (1+1)-LUB-ES, defined in Algorithm~\ref{alg:mixint-ES-LUB}.
The upper bound is motivated by preventing the mutation rate of integers without the successful integer mutation from increasing.

In this study, we are interested in understanding how integer handling affects the convergence of continuous variables. To theoretically analyze this, we consider the process after the integer part of the elitist solution reaches the optimal value, i.e., $\mm[t] = \boldsymbol{0}$\rev{, after shifting the coordinate system without loss of generality.} It should be noted that these algorithms cannot utilize information about this optimality.

\begin{algorithm}[t]
    \caption{(1+1)-LB-ES with $1/s$-success rule}
    \begin{algorithmic}[1]
        \STATE \textbf{Input}: $\m[0] \in \R^{\dco}$, $\mm[0] \in \Z^{\din}$, $\sig[0] > 0$, $\D[0] \in \R^{\din \times \din}$
        \STATE \textbf{Parameter:} $\alpha > 1$, $s > 1$, $\sigLB \geq 0$
        \FOR{$t = 0, 1, \ldots, $ until the stopping criterion is met}
            \STATE sample $\bxi_t^{\mathrm{co}} \sim \mcN(\boldsymbol{0}, \mathbf{I}_{\dco})$ and $\bxi_t^{\mathrm{in}} \sim \mcN(\boldsymbol{0}, \mathbf{I}_{\din})$
            \STATE $\x_t \leftarrow \m[t] + \sig[t] \bxi_t^{\mathrm{co}}$
            \STATE $\z_t \leftarrow \Int\left[\mm[t] + \sig[t] \D[t] \bxi_t^{\mathrm{in}}\right]$
            \IF{$(\m[t], \mm[t]) \preccurlyeq_f (\x_t, \z_t)$}
                \STATE $(\m[t+1],\mm[t+1]) \leftarrow (\x_t,\z_t)$
                \STATE $\sig[t+1] \leftarrow \alpha \sig[t]$
            \ELSE
                \STATE $(\m[t+1],\mm[t+1]) \leftarrow (\m[t], \mm[t])$
                \STATE $\sig[t+1] \leftarrow \alpha^{-\frac{1}{s-1}} \sig[t]$
            \ENDIF
            \FOR{$i=1,\ldots,\din$}
                \STATE $\langle \D[t+1] \rangle_i \leftarrow \max \left\{ \sigLB, \sig[t+1] \langle \D[t] \rangle_i \right\} / \sig[t+1]$
            \ENDFOR
        \ENDFOR
    \end{algorithmic}
    \label{alg:mixint-ES}
\end{algorithm}

\begin{algorithm}[t]
    \caption{(1+1)-LUB-ES with $1/s$-success rule}
    \begin{algorithmic}[1]
        \STATE \textbf{Input:} $\m[0] \in \R^{\dco}$, $\mm[0] \in \Z^{\din}$, $\sig[0] > 0$, $\D[0] \in \R^{\din \times \din}$
        \STATE \textbf{Parameter:} $\alpha > 1$, $s > 1$, $\sigLB \geq 0$
        \FOR{$t = 0, 1, \ldots, $ until the stopping criterion is met}
            \STATE sample $\bxi_t^{\mathrm{co}} \sim \mcN(\boldsymbol{0}, \mathbf{I}_{\dco})$ and $\bxi_t^{\mathrm{in}} \sim \mcN(\boldsymbol{0}, \mathbf{I}_{\din})$
            \STATE $\x_t \leftarrow \m[t] + \sig[t] \bxi_t^{\mathrm{co}}$
            \STATE $\z_t \leftarrow \Int\left[\mm[t] + \sig[t] \D[t] \bxi_t^{\mathrm{in}}\right]$
            \IF{$(\m[t], \mm[t]) \preccurlyeq_f (\x_t, \z_t)$}
                \STATE $(\m[t+1],\mm[t+1]) \leftarrow (\x_t,\z_t)$
                \STATE $\sig[t+1] \leftarrow \alpha \sig[t]$
                \FOR{$i=1,\ldots,\din$}
                    \IF{$[\mm[t]]_i \neq [\z_t]_i$}
                        \STATE $\langle \D[t+1] \rangle_i \leftarrow \max \left\{ \sigLB, \sig[t+1] \langle \D[t] \rangle_i \right\} / \sig[t+1]$
                    \ELSE
                        \STATE 
                        $\langle \D[t+1] \rangle_i \leftarrow 
                        \max \left\{
                            \sigLB,\right.$\\
                        \hspace*{5em} $\left.\min \left\{ \sig[t+1]\langle \D[t] \rangle_i,~ \sig[t]\langle \D[t] \rangle_i \right\}
                        \right\} /\sig[t+1]$ \label{algo:line:upper-1}
                    \ENDIF
                \ENDFOR
            \ELSE
                \STATE $(\m[t+1],\mm[t+1]) \leftarrow (\m[t],\mm[t])$
                \STATE $\sig[t+1] \leftarrow \alpha^{-\frac{1}{s-1}} \sig[t]$
                \FOR{$i=1,\ldots,\din$}
                    \STATE 
                        $\langle \D[t+1] \rangle_i \leftarrow 
                        \max \left\{
                            \sigLB,\right.$\\
                        \hspace*{5em} $\left.\min \left\{ \sig[t+1]\langle \D[t] \rangle_i,~ \sig[t]\langle \D[t] \rangle_i \right\}
                        \right\} /\sig[t+1]$ \label{algo:line:upper-2}
                \ENDFOR
            \ENDIF
        \ENDFOR
    \end{algorithmic}
    \label{alg:mixint-ES-LUB}
\end{algorithm}

\subsection{Problems} \label{ssec:problems}
Recent studies in mixed-integer black-box optimization have targeted the minimization of continuous benchmark functions where certain variables are discretized~\cite{hansen_cma-es_2011, tusar_mixed-integer_2019, CMAwM:2022, 1+1-CMA-ESwM:2023, MI-NES:2023, CMAwM-TELO:2024, LB+IC-CMA-ES:2024, CatCMAwM:2025}.
This study first focuses on \textsc{SphereInt}, defined as $f(\x, \z) = \| \x \|^2 + \| \z \|^2$, which is a mixed-integer variant of the well-known continuous sphere function.
Additionally, we will conduct theoretical analyses focusing on the phase where the integer part has been optimized.
However, there is a technical difficulty in theoretical analysis even for such a simple function with such a limited setting.
After finding the optimal integer values, the integer part of the elitist solution is usually fixed to the optimal integers in actual algorithms, while theoretical analysis should account for the rare event where the integer part is updated to non-optimal integers.
This makes it difficult to analyze the effect of integer handling on the convergence of continuous variables.

To address this difficulty, we consider \obj{}, where the ranking of two solutions $(\x, \z)$ and $(\x', \z')$ is defined as follows.
\begin{align}
    &(\x, \z) \preccurlyeq_f (\x', \z') \notag \\
    &\enspace \Leftrightarrow \| \z \|^2 > \| \z' \|^2 \lor \Bigl[ \| \z \|^2 = \| \z' \|^2 \land \| \x \|^2 \geq \| \x' \|^2 \Bigr]
\end{align}
When optimizing \obj{} with the (1+1)-LB-ES or the (1+1)-LUB-ES, once the integer part of the elitist solution reaches its optimal value, it remains optimal after that point.
This setting does not substantially deviate from practical settings, as empirical results show that the discrete variables are typically optimized much earlier than the continuous ones in mixed-variable evolutionary algorithms~\cite{Timo:2025}.


When optimizing \textsc{LexicoSphereInt} using the (1+1)-LB-ES or the (1+1)-LUB-ES with $\mm[0] = \boldsymbol{0}$, $\langle \D[t] \rangle_i$ takes an identical value for $i = 1, \ldots, \din$.
Therefore, to simplify the notation, we denote $\langle \D[t] \rangle_1 = \cdots = \langle \D[t] \rangle_{\din} = \langle \D[t] \rangle$.

\section{Analysis}

In this section, we consider the optimization process after the integer part of the elitist solution reaches the optimal value, i.e., $\mm = \mathbf{0}$, and investigate the dynamics of the step-size $\sig[t]$ and the continuous part of the elitist solution.
The setting of $\sigLB$ significantly affects the dynamics, which is often determined by $\dco$ and $\din$.
We investigate the following setting, parameterized by $\pinmut$:
\begin{align}
    \label{eq:sigLB}
    \sigLB = - \left( 2 \Phi^{-1}\left( \new{ \frac{\pinmut}{2} } \right) \right)^{-1} > 0
\end{align}
This lower bound maintains the generation probability of other integers above $\new{\pinmut}$, namely, $\Pr( \z \neq \mm[] \mid \z \sim \mathcal{N}(\mm[], \sig[]^2 \D[]^2 ) ) \geq \new{\pinmut}$.
A commonly used choice in (1+1)-CMA-ESwM is obtained by setting $\pinmut = 1/(\dco + \din)$.
We assume that $\dco$ and $\din$ are independently given and $\pinmut \in \Theta(1/\din)$ with respect to $\din$.
In addition, we also assume that $s$ is a constant independent of the dimension, and the initial step-size is given by $\sigma_0 = (\sigLB)^{\frac{K}{s-1}}$ for some $K \in \mathbb{Z}$ to ease the analysis.
We denote the natural filtration as $\{ \mathcal{F}_t : t \in \mathbb{N}_0 \}$.
Then, we investigate the dynamics of (1+1)-LB-ES and (1+1)-LUB-ES on \obj{} starting with the initial solution satisfying $\mm[0] = \mathbf{0}$.
The proofs of propositions and main theorems are provided in
Appendices~\ref{apdx:proofs-propositions} and~\ref{apdx:proofs-main},
respectively.

\subsection{Useful Properties of the Normal Distribution}
\label{sec:properties:gaussian}

As the first step, we introduce several propositions related to normal distribution useful in our analysis. 
The first proposition shows a lower bound of the ratio $\Phi(-a) / \Phi(-b)$ of the cumulative distribution functions for $0 < a < b$.
\footnote{Proposition~\ref{proposition:cdf-ratio} is also found in~\url{https://mathoverflow.net/questions/270078/}.}

\begin{proposition} \label{proposition:cdf-ratio}
    \label{proposition:gaussian:cdf-ratio}
    Let $\theta > 0$, $\beta > 1$, and $\Phi$ be the cumulative distribution function of the standard normal distribution. Then we have
    \begin{align}
        \Phi\left(- \frac{\theta}{\beta} \right) > \Phi(-\theta) \cdot \exp\left( \frac{\theta^2}{2} \left(1 - \frac{1}{\beta^2} \right) \right) \enspace.
    \end{align}
\end{proposition}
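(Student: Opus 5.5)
We need to show $\Phi(-\theta/\beta) > \Phi(-\theta)\, e^{\frac{\theta^2}{2}(1-\beta^{-2})}$, which is equivalent to
\[
\frac{\Phi(-\theta/\beta)}{e^{-\theta^2/(2\beta^2)}} > \frac{\Phi(-\theta)}{e^{-\theta^2/2}} .
\]
Setting $g(x) = \Phi(-x)\, e^{x^2/2}$ for $x \ge 0$, the claim becomes $g(\theta/\beta) > g(\theta)$. Since $\theta/\beta < \theta$ for $\beta > 1$ and $\theta > 0$, it suffices to show that $g$ is strictly decreasing on $[0,\infty)$.

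To see that $g$ is decreasing, write $g$ as an integral:
\[
\Phi(-x) = \int_x^\infty \varphi(u)\,du, \qquad \varphi(u) = (2\pi)^{-1/2} e^{-u^2/2}.
\]
Substituting $u = x + v$ gives
\[
g(x) = \frac{1}{\sqrt{2\pi}} \int_0^\infty e^{-xv - v^2/2}\, dv .
\]
For each fixed $v > 0$ the integrand is strictly decreasing in $x$. Hence for $0 \le x_1 < x_2$ we have $g(x_1) - g(x_2) = \frac{1}{\sqrt{2\pi}}\int_0^\infty e^{-v^2/2}\bigl(e^{-x_1 v} - e^{-x_2 v}\bigr)\,dv > 0$, since the integrand is positive on $(0,\infty)$. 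This gives strict monotonicity directly, with no need to differentiate under the integral sign.

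Applying this with $x_1 = \theta/\beta$ and $x_2 = \theta$ yields $\Phi(-\theta/\beta)\, e^{\theta^2/(2\beta^2)} > \Phi(-\theta)\, e^{\theta^2/2}$. Rearranging gives exactly the stated inequality.

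The only step needing care is the substitution producing the integral representation of $g$; after that, monotonicity is immediate. An alternative route is to compute $g'(x) = e^{x^2/2}\bigl(x\Phi(-x) - \varphi(x)\bigr)$ and invoke the Mills-ratio bound $x\Phi(-x) < \varphi(x)$, but the integral argument avoids having to cite that bound separately.
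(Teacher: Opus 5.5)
Your proof is correct, but it proves the key monotonicity differently from the paper. Both arguments make the same reduction: $x \mapsto \Phi(-x)e^{x^2/2}$ is strictly decreasing on $(0,\infty)$.

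The paper differentiates $\log h(\theta)$, where $h(\theta)=\Phi(-\theta)e^{\theta^2/2}$. This reduces the claim to the Mills-ratio inequality $\theta\,\Phi(-\theta) < \varphi(\theta)$. It then proves that inequality with the auxiliary function $\int_\theta^\infty e^{-t^2/2}\,dt - \theta^{-1}e^{-\theta^2/2}$, whose derivative is $\theta^{-2}e^{-\theta^2/2}>0$ and which tends to $0$ as $\theta\to+\infty$. The displayed version in the paper writes the integral as $\int_{-\infty}^{\theta}$ and the limit as $\theta\to-\infty$; both are slips. This is essentially the alternative route you sketch at the end.

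Your shift $u=x+v$ instead gives $g(x)=\frac{1}{\sqrt{2\pi}}\int_0^\infty e^{-xv-v^2/2}\,dv$. Strict monotonicity then follows pointwise from the integrand, with no derivative, no boundary limit, and no separate Mills-ratio lemma. The representation also gives more than you need:
\begin{itemize}
\item it holds for every real $x$;
\item it shows $g$ is the Laplace transform of a positive function, hence completely monotone and log-convex;
\item differentiating under the integral sign recovers the Mills-ratio bound as a corollary.
\end{itemize}

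The paper's route does isolate a standard, citable inequality and works directly with the log-derivative (the hazard-rate quantity). In exchange, it needs an auxiliary function and a limit argument to get the strict inequality.
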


Proposition~\ref{proposition:cdf-ratio} is used to bound the generation probability for optimal integer variable, i.e., $\Pr( \z_t = \boldsymbol{0} \mid \mathcal{F}_t)$, which equals to
\begin{multline}
    \prod_{i=1}^{\din} \Pr\left( -0.5 \leq \left[\sig[t] \D[t] \bxi^{\mathrm{in}}_t \right]_i < 0.5 \mid \mathcal{F}_t \right) \\
    = \left( 1 - 2 \Phi \left( - \frac{1}{2 \sig[t] \langle \D[t] \rangle } \right) \right)^{\din} \!.
\end{multline}
As the step-size increases or decreases with constant factors $\alpha$ and $\alpha^{- \frac{1}{s-1}}$ in a single update, Proposition~\ref{proposition:cdf-ratio} is useful to bound $\Pr( \z_t = \boldsymbol{0} \mid \mathcal{F}_t)$ by considering two cases: $\sig \langle \D \rangle = \sigLB$ and $\sig \langle \D \rangle \geq \sigLB \cdot \alpha^{\frac{1}{s-1}}$.
In the latter case, it will be shown that $\Pr( \z_t = \boldsymbol{0} \mid \mathcal{F}_t)$ becomes significantly small.

The next proposition bounds the inverse function $\Phi^{-1}$ of the cumulative distribution function of the standard normal distribution.

\begin{proposition} \label{proposition:cdf-bound}
    Let $\Phi^{-1}$ be the inverse of the cumulative distribution function of the standard normal distribution. Then, for $0 < y < \Phi(-1)$, we have
    \begin{align}
        \exp\left( (\Phi^{-1}(y))^2 \right) \geq \left( \sqrt{\frac{8\pi}{e}} \cdot y \right)^{- 1} \enspace.
    \end{align}
\end{proposition}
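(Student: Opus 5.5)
Set $x := -\Phi^{-1}(y)$. Because $\Phi$ is strictly increasing and $0 < y < \Phi(-1)$, we have $x > 1$ and $y = \Phi(-x)$. After taking reciprocals, the claim is equivalent to the Gaussian tail lower bound
\begin{align}
    \Phi(-x) \geq \sqrt{\frac{e}{8\pi}} \, e^{-x^2} = \frac{\sqrt{e}}{2} \cdot \frac{1}{\sqrt{2\pi}} e^{-x^2} \quad \text{for all } x > 1 \enspace.
\end{align}
The plan is to get this from the classical Mills-ratio lower bound $\Phi(-x) \geq \frac{x}{1+x^2}\varphi(x)$, where $\varphi(x) = \frac{1}{\sqrt{2\pi}}e^{-x^2/2}$. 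We then check that the extra factor $e^{-x^2/2}$ in the target absorbs the polynomial prefactor once $x \geq 1$.

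First, we prove the Mills-ratio bound. Define $h(x) := \Phi(-x) - \frac{x}{1+x^2}\varphi(x)$ for $x \geq 0$. Using $\varphi'(x) = -x\varphi(x)$, a short computation gives
\begin{align}
    h'(x) = -\varphi(x) - \varphi(x)\left(\frac{1-x^2}{(1+x^2)^2} - \frac{x^2}{1+x^2}\right) = -\frac{2\varphi(x)}{(1+x^2)^2} < 0 \enspace.
\end{align}
Together with $h(x) \to 0$ as $x \to \infty$, this yields $h(x) > 0$, so $\Phi(-x) > \frac{x}{1+x^2}\varphi(x)$ for every $x \geq 0$.

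Second, it then suffices to show that $\frac{x}{1+x^2}\varphi(x) \geq \frac{\sqrt{e}}{2}\cdot\frac{1}{\sqrt{2\pi}}e^{-x^2}$. This is equivalent to $g(x) := \frac{x}{1+x^2}e^{x^2/2} \geq \frac{\sqrt{e}}{2}$ for $x \geq 1$. At $x = 1$ we have $g(1) = \frac{\sqrt{e}}{2}$, with equality. The logarithmic derivative is
\begin{align}
    \frac{g'(x)}{g(x)} = \frac{1}{x} + x - \frac{2x}{1+x^2} \enspace.
\end{align}
This is positive for $x \geq 1$, because $1 + x^2 \geq 2$ gives $\frac{2x}{1+x^2} \leq x$. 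Hence $g$ is increasing on $[1,\infty)$ and $g(x) \geq \frac{\sqrt{e}}{2}$ there. Combining the two steps gives $y = \Phi(-x) \geq \sqrt{e/(8\pi)}\, e^{-x^2}$, which rearranges to the stated inequality.

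The only step that is not routine is recognizing that the constant $\sqrt{8\pi/e}$ comes from equality at $x = 1$ in the Mills-ratio route. The hypothesis $y < \Phi(-1)$ is exactly the condition needed for $g$ to be monotone above its value at $x = 1$.
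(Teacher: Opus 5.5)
Your proof is correct and follows the same route as the paper: the Mills-ratio lower bound $\Phi(-x) \geq \frac{x}{1+x^2}\varphi(x)$, followed by absorbing the prefactor through $\frac{x}{1+x^2}e^{x^2/2} \geq \frac{\sqrt{e}}{2}$ for $x \geq 1$. The differences are that you prove both ingredients, whereas the paper cites the Mills-ratio bound and asserts the second inequality without argument. You also substitute $y = \Phi(-x)$ directly instead of inverting the lower bound $g(x) = \sqrt{e/(8\pi)}\,e^{-x^2}$, which avoids the paper's slightly loose bookkeeping of where $g^{-1}(y)$ lands.
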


As the setting of $\sigLB$ in~\eqref{eq:sigLB} contains $\Phi^{-1}$, we use Proposition~\ref{proposition:cdf-bound} to bound the generation probability $\Pr( \z_t \neq \mm[t] \mid \mathcal{F}_t)$ of non-optimal integer variable  when $\sig[t] \langle \D[t] \rangle = \sigLB$.

The last proposition shows the expected improvement in (1+1)-ES becomes significantly small when the ratio $\| \m[] \| / \sig[]$ between the distance of the mean vector and the optimal solution and the step-size is large.

\begin{proposition}
    \label{proposition:chi-squared:inverse:trunc}
    Let $\x \sim \mathcal{N}(\m[], \sig[]^2 \cdot \mathbf{I}_d)$ with $d \geq 4$. Then, 
    \begin{align}
        \mathbb{E}\left[ \log \left( \frac{\min\{ \| \x \|, \|\m[]\| \}}{\| \m[] \|} \right) \right] \in - O\left(\left( \frac{ \| \m[] \| }{ \sig[]} \right) ^{- 3} \right) \enspace.
    \end{align}
\end{proposition}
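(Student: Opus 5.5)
The plan is to reduce to a normalized Gaussian integral over the unit ball and bound it by a sup-norm density estimate. The key parameter is the ratio $\|\m[]\|/\sig[]$. I will read the Landau statement with the asymptotic variable being the one that makes the improvement vanish, namely $\sig[]/\|\m[]\|\to\infty$, which is the regime in which $(\|\m[]\|/\sig[])^{-3}$ is the growing scale. I flag this as an interpretive choice. In the opposite regime $\|\m[]\|/\sig[]\to\infty$, a first-order expansion shows the expected log-improvement is of order $-\sig[]/\|\m[]\|$, which is larger in magnitude than $(\|\m[]\|/\sig[])^{-3}$. So the bound can only hold when $\sig[]/\|\m[]\|$ is the variable tending to infinity, and the plan commits to that reading.

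\textbf{Step 1 (normalization).} The quantity $\log(\min\{\|\x\|,\|\m[]\|\}/\|\m[]\|)$ is invariant under rotations and under joint scaling of $(\m[],\sig[],\x)$. I therefore set $\m[]=\mathbf{e}_1$ and write $\rho=\sig[]/\|\m[]\|$ and $\x=\mathbf{e}_1+\rho\boldsymbol{\xi}$ with $\boldsymbol{\xi}\sim\mathcal{N}(\mathbf{0},\mathbf{I}_d)$. The integrand is zero unless $\|\x\|<1$, so
\begin{align}
    \mathbb{E}\left[ \log \min\{ \| \x \|, 1 \} \right] = - \int_{\{\|\boldsymbol{y}\|<1\}} \log\frac{1}{\|\boldsymbol{y}\|}\, p_\rho(\boldsymbol{y})\, d\boldsymbol{y} \enspace,
\end{align}
where $p_\rho$ is the density of $\mathcal{N}(\mathbf{e}_1,\rho^2\mathbf{I}_d)$.

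\textbf{Step 2 (density bound).} I use the crude estimate $p_\rho(\boldsymbol{y})\le(2\pi\rho^2)^{-d/2}$, uniformly in $\boldsymbol{y}$. Passing to polar coordinates, the remaining integral is
\begin{align}
    \int_{\{\|\boldsymbol{y}\|<1\}}\log\frac{1}{\|\boldsymbol{y}\|}\,d\boldsymbol{y} = S_{d-1}\int_0^1 r^{d-1}\log\frac{1}{r}\,dr = \frac{S_{d-1}}{d^2} \enspace,
\end{align}
where $S_{d-1}$ is the surface area of the unit sphere. This is finite, so the logarithmic singularity at the origin causes no trouble. Hence the expectation is at least $-C_d\,\rho^{-d} = -C_d\,(\|\m[]\|/\sig[])^{d}$ for an explicit constant $C_d$.

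\textbf{Step 3 (matching the stated order).} In the asymptotic regime $\rho\ge 1$, i.e.\ $\|\m[]\|/\sig[]\le1$, any nonnegative power satisfies $(\|\m[]\|/\sig[])^{d}\le 1\le(\|\m[]\|/\sig[])^{-3}$. This yields the inequality $g\ge -M\,|h|$ with $M=C_d$ and threshold $x_0=1$, which is exactly the definition of $-O(\cdot)$ given in the notation section. The argument in fact gives the sharper rate $(\|\m[]\|/\sig[])^{d}$. The hypothesis $d\ge4$ matters if one wants the cleaner statement that this sharper rate is at least cubic, $(\|\m[]\|/\sig[])^{d}\le(\|\m[]\|/\sig[])^{3}$, and I would record this alongside the claim.

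\textbf{Main obstacle.} The only delicate point is fixing which direction of the ratio the $-O(\cdot)$ refers to, since the analytic content (Steps 1--2) is a one-line integral bound. If the intended regime turns out to be $\|\m[]\|/\sig[]\to\infty$, Steps 1--2 do not apply: that regime gives a first-order improvement, and a genuinely different estimate would be required.
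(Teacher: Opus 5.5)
Your Steps 1--3 are correct only because the reading you commit to makes the proposition empty. When $\sigma/\|\boldsymbol{m}\|\to\infty$, the left-hand side is bounded (it even tends to $0$), while $(\|\boldsymbol{m}\|/\sigma)^{-3}\to\infty$, so nothing has to be shown. That is not the statement the paper proves or uses. The paper's proof is an asymptotic in $B=\|\boldsymbol{m}\|^2/(2\sigma^2)\to\infty$, relying on $I_n(x)\sim e^{x}/\sqrt{2\pi x}$ as $x\to\infty$. Theorem~\ref{theorem:premature:logm} then invokes the proposition along trajectories with $\|\boldsymbol{m}_{s\tau}\|/\sigma_{s\tau}\in\Omega(e^{\tau})$. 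That is exactly the regime where your bound $-C_d(\|\boldsymbol{m}\|/\sigma)^{d}$ carries no information. So, as a proof of the proposition in the sense in which it is needed, the proposal does not deliver.

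Your objection to the intended reading is nevertheless correct, and the paper's computation does not get around it, because it contains an algebra slip. After substituting the recursion for $\gamma(A_j-1,B)$, multiplying by $B$ and dividing by $\Gamma(A_j)$, the boundary term should be $e^{-B}\frac{B^{A_j}}{(A_j-1)\Gamma(A_j)}\bigl(\frac{B}{A_j}+1\bigr)$. The paper's third sum instead carries $B^{A_j-1}$. Restoring the lost factor $B$ removes the prefactor $1/B$ from the final estimate, which becomes $-e^{-2B}\bigl(I_{d/2-1}(2B)+I_{d/2-2}(2B)\bigr)\sim-1/\sqrt{\pi B}=-2/\sqrt{2\pi\lambda}$. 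This is precisely the leading term $\frac{2}{\sqrt{\lambda}}\mathbb{E}[\min\{\xi_1,0\}]$ that your first-order expansion predicts for $\mathbb{E}[(1-\lambda/Z)\mathbf{1}_{\{Z\le\lambda\}}]$. In the other direction, let $\xi_1$ be the component of $\boldsymbol{\xi}$ along $\boldsymbol{m}$. Combining $\log t\le t-1$ with $\|\boldsymbol{x}\|\le\|\boldsymbol{m}\|+\sigma\xi_1+\sigma^2\|\boldsymbol{\xi}\|^2/(2\|\boldsymbol{m}\|)$ gives an upper bound $-c\,\sigma/\|\boldsymbol{m}\|$ for small $\sigma/\|\boldsymbol{m}\|$. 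Hence the true order is $-\Theta\bigl((\|\boldsymbol{m}\|/\sigma)^{-1}\bigr)$, and the proposition as intended is false.

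The statement worth proving is $-O\bigl((\|\boldsymbol{m}\|/\sigma)^{-1}\bigr)$. The summation step in the proof of Theorem~\ref{theorem:premature:logm} only needs a summable sequence, so $-O(e^{-\tau})$ would serve there as well as $-O(e^{-3\tau})$. Your Step 2 gives an elementary proof of this corrected bound.
On $\{\|\boldsymbol{m}\|/2\le\|\boldsymbol{x}\|\le\|\boldsymbol{m}\|\}$, use $\log(\|\boldsymbol{x}\|/\|\boldsymbol{m}\|)\ge 1-\|\boldsymbol{m}\|/\|\boldsymbol{x}\|\ge-2\sigma\|\boldsymbol{\xi}\|/\|\boldsymbol{m}\|$.
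On $\{\|\boldsymbol{x}\|<\|\boldsymbol{m}\|/2\}$, bound the density by $(2\pi\sigma^2)^{-d/2}e^{-\|\boldsymbol{m}\|^2/(8\sigma^2)}$ and integrate $\log(\|\boldsymbol{m}\|/\|\boldsymbol{y}\|)$ as in your Step 2. That makes this contribution smaller than any power of $\sigma/\|\boldsymbol{m}\|$.
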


\subsection{Premature Convergence of (1+1)-LB-ES}
\label{sec:lb-es}

In the process of the premature convergence of (1+1)-LB-ES, the decrease rate of the step-size $\sig$ is kept larger than the decrease rate of the distance $\| \m \|$ between the continuous part of the elitist solution and the optimal solution.
In our analysis, we consider the dynamics of the step-size and the mean vector every \rev{block of $s$ iterations} and compare the expected decrease rates of them in logarithm scale.

First, we will show that the expected decrease rates of the step-size in logarithm scale are bounded by constant order with respect to $\din$ from above.
This bound is obtained as follows.
When the elitist solution is updated, the step-size $\sig[]$ is increased, whereas $\langle \D[] \rangle$ remains unchanged. Then, the standard deviation $\sig[] \langle \D[] \rangle$ of the search distribution corresponding to the integer variable becomes larger than $\sigLB \cdot \alpha^{\frac{1}{s-1}}$ for next $(s-1)$ iterations.
Because this large standard deviation makes the generation probability of the optimal integer variables significantly smaller, the step-size is expected to continue decreasing until it holds $\sig[] = \sigLB$.
The next proposition shows the expected decrease rate for the step-size.

\begin{proposition}
    \label{proposition:drift:sig}
    Suppose $\pinmut \in \Theta(1/\din)$, $\mm[0] = \mathbf{0}$, and $\sigma_0 = (\sigLB)^{\frac{K}{s-1}}$ for some $K \in \mathbb{Z}$. \new{Suppose $\sigLB$ is given by~\eqref{eq:sigLB}.} Then, it holds
    \begin{multline}
        \mathbb{E}\left[ \log \left(\frac{ \sig[t+s] }{ \sig[t] }\right) \mid \mathcal{F}_t \right] 
        \leq \left( s \cdot O \left( \din^{- \gamma} \right) - \frac{s}{s - 1} \cdot \frac{1}{2^s} \right) \log (\alpha) \enspace,
        \label{eq:drift:bound:sig}
    \end{multline}
    where $\gamma = \frac{1}{2} \left(1 - \alpha^{- \frac{2}{s-1}}\right) > 0$. 
\end{proposition}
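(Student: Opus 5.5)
Since $\mm[t] = \mathbf{0}$ for all $t$ on \obj{}, a success at iteration $t$ requires $\z_t = \mathbf{0}$. Each iteration contributes $\log\alpha$ to $\log\sig$ on success and $-\frac{1}{s-1}\log\alpha$ on failure. Hence
\[
\log(\sig[t+s]/\sig[t]) = \Bigl(S\cdot\tfrac{s}{s-1} - \tfrac{s}{s-1}\Bigr)\log\alpha ,
\]
where $S$ is the number of successes among the $s$ iterations. It therefore suffices to bound $\E[S\mid\mathcal{F}_t]$ from above and to produce a large enough negative contribution from the all-failure event. The plan is to split the block according to the integer-block standard deviation $\sig[k]\langle\D[k]\rangle$. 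On Algorithm~\ref{alg:mixint-ES}, this quantity is $\max\{\sigLB,\sig[k]\langle\D[k-1]\rangle\}$: it stays equal to $\sigLB$ while $\sig$ decreases below the previous level, and it is multiplied by $\alpha$ after a success. The assumption $\sigma_0=(\sigLB)^{K/(s-1)}$ makes the step-size lie on a lattice of ratio $\alpha^{1/(s-1)}$ aligned with $\sigLB$. Consequently, at every iteration either $\sig[k]\langle\D[k]\rangle=\sigLB$ or $\sig[k]\langle\D[k]\rangle\ge \sigLB\,\alpha^{1/(s-1)}$.

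First step: bound the success probability in the ``large'' case. Put $\theta = 1/(2\sigLB)$, so that $2\Phi(-\theta)=\pinmut$, and $\beta=\alpha^{1/(s-1)}$. Proposition~\ref{proposition:cdf-ratio} then gives
\[
2\Phi(-\theta/\beta) > \pinmut\exp\bigl(\tfrac{\theta^2}{2}(1-\alpha^{-2/(s-1)})\bigr) = \pinmut e^{\gamma\theta^2}.
\]
Proposition~\ref{proposition:cdf-bound} applied with $y=\pinmut/2$ (valid for large $\din$) gives $e^{\theta^2}\ge (\sqrt{2\pi/e}\,\pinmut)^{-1}$, whence $2\Phi(-\theta/\beta)\ge c\,\pinmut^{1-\gamma}$. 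Since $\pinmut\in\Theta(1/\din)$, this is $\Omega(\din^{\gamma-1})$. Therefore
\[
\Pr(\z_k=\mathbf{0}\mid\mathcal{F}_k)\le(1-c\,\din^{\gamma-1})^{\din}\le \exp(-c\,\din^{\gamma}),
\]
which is certainly $O(\din^{-\gamma})$. In the large case the success probability is thus $O(\din^{-\gamma})$. In the case $\sig\langle\D\rangle=\sigLB$, the success probability is at most $\Pr(\z=\mathbf{0})\cdot\Pr(\|\x\|\le\|\m\|)\le 1/2$, using the standard fact that a Gaussian mutation improves the sphere with probability at most $1/2$.

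Second step: bound the success count over the block. The events that dominate are those in which all successes come while $\sig\langle\D\rangle=\sigLB$. After one success the integer deviation jumps to at least $\sigLB\alpha$, and it needs $s-1$ failures to come back down to $\sigLB$. So, within a window of $s$ iterations, a second success requires a success in the large regime, which costs probability $O(\din^{-\gamma})$ per iteration. A union bound then gives
\[
\E[S\mid\mathcal{F}_t]\le 1 - \Pr(S=0) + s\cdot O(\din^{-\gamma}).
\]
Next I lower bound the probability of no success by $2^{-s}$. Each step has failure probability at least $1/2$: in the $\sigLB$ case the continuous part fails with probability at least $1/2$, and in the large case failure has probability $1-o(1)$. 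Conditioning step by step yields $\Pr(S=0\mid\mathcal{F}_t)\ge 2^{-s}$. Substituting,
\[
\E[\log(\sig[t+s]/\sig[t])]\le \tfrac{s}{s-1}\bigl(\E S-1\bigr)\log\alpha \le \Bigl(s\,O(\din^{-\gamma})-\tfrac{s}{s-1}2^{-s}\Bigr)\log\alpha ,
\]
after absorbing constants.

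The main obstacle is the bookkeeping in the second step. One has to show rigorously that in the small-deviation case at most one ``cheap'' success can occur per block; this needs the lattice alignment of $\sig$ with $\sigLB$ and the monotone evolution of $\sig\langle\D\rangle$. I would handle it by defining the stopping time of the first success within the block and conditioning on it, which bounds every later success probability by $O(\din^{-\gamma})$.
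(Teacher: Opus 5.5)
Your proposal is correct and is essentially the paper's proof. Both arguments express $\log(\sigma_{t+s}/\sigma_{t})$ through the number of successes in the block, bound the probability of two or more successes by a union bound over successes occurring while $\sigma\langle \D \rangle \geq \sigLB\,\alpha^{1/(s-1)}$ (via Propositions~\ref{proposition:cdf-ratio} and~\ref{proposition:cdf-bound}), and lower-bound the probability of no success by $2^{-s}$ because the continuous success probability is at most $1/2$. Your explicit $\exp(-c\,\din^{\gamma})$ estimate, the stopping-time bookkeeping, and the remark that Proposition~\ref{proposition:cdf-bound} requires $\pinmut/2<\Phi(-1)$ (true for large $\din$) only spell out details the paper leaves implicit.
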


For the continuous part of the elitist solution, the lower bound of the expected decrease rate is obtained based on~\cite[Lemma~4.4]{akimoto:gecco:2018}, which shows the lower bound of the expected improvement of (1+1)-ES on the sphere function.

\begin{proposition}
    \label{proposition:drift:m}
    For $\dco \geq 2$ and $i \in \mathbb{N}$, it holds
    \begin{align}
        \mathbb{E}\left[ \log \left( \frac{ \| \m[t+i] \| }{ \| \m[t] \| } \right) \mid \mathcal{F}_t \right] \geq - \frac{i}{\dco} \enspace.
        \label{eq:drift:bound:m}
    \end{align}
\end{proposition}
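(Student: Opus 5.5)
We need to prove: $\E[\log(\|\m[t+i]\|/\|\m[t]\|)\mid\mathcal F_t]\ge -i/\dco$ for $\dco\ge 2$.

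The plan is to telescope and reduce to a one-step bound. Write $\log(\|\m[t+i]\|/\|\m[t]\|)=\sum_{k=0}^{i-1}\log(\|\m[t+k+1]\|/\|\m[t+k]\|)$ and apply the tower property. It then suffices to show, for every $k$,
\[
\E\bigl[\log(\|\m[t+k+1]\|/\|\m[t+k]\|)\mid\mathcal F_{t+k}\bigr]\ge -1/\dco
\]
almost surely, uniformly in $\sig[t+k]$, $\D[t+k]$ and $\m[t+k]$. Conditioning on $\mathcal F_t$ and summing the $i$ one-step bounds gives the claim.

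For the one-step bound I use that on \obj{} with $\mm[0]=\mathbf 0$ the integer part stays $\mathbf 0$. An offspring with $\z_t\ne\mathbf 0$ has $\|\z_t\|^2>0$ and is rejected. Hence $\m[t+1]=\x_t$ if $\z_t=\mathbf 0$ and $\|\x_t\|\le\|\m[t]\|$, and $\m[t+1]=\m[t]$ otherwise. In all cases
\[
\|\m[t+1]\|\ge\min\{\|\x_t\|,\|\m[t]\|\},
\]
and in fact the two sides are equal on the event $\z_t=\mathbf 0$. The increment $\log(\|\m[t+1]\|/\|\m[t]\|)$ is always $\le 0$, so
\[
\log\frac{\|\m[t+1]\|}{\|\m[t]\|}\ge\log\frac{\min\{\|\x_t\|,\|\m[t]\|\}}{\|\m[t]\|}.
\]
Here $\x_t=\m[t]+\sig[t]\bxi^{\mathrm{co}}_t$ is independent of the integer sampling. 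It therefore suffices to bound the expectation of the right-hand side, which is exactly the (1+1)-ES quantity on the $\dco$-dimensional sphere treated in \cite[Lemma~4.4]{akimoto:gecco:2018}.

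The main step is showing $\E[\log(\min\{\|\m+\sigma\bxi\|,\|\m\|\}/\|\m\|)]\ge -1/\dco$ for all $\sigma>0$ and $\m\ne 0$. I will invoke Lemma~4.4 of \cite{akimoto:gecco:2018} directly, checking that its constant and hypotheses match $\dco\ge 2$.

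If a self-contained argument is needed, I would argue as follows. By rotation invariance take $\m=\|\m\|\mathbf e_1$, and normalize so that $\|\m\|=1$. Replacing the integrand by the larger quantity $\min\{0,\log|1+\sigma\xi_1|\}$ only worsens the bound for the 1-D projection. That route yields a dimension-free constant, though, so the $1/\dco$ dependence must come from the remaining coordinates.

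The cleaner route is through superharmonicity. For $\dco\ge 3$ the function $\x\mapsto\|\x\|^{2-\dco}$ is superharmonic. Combined with Jensen's inequality applied to $\log$, this gives $\E[\log\|\m+\sigma\bxi\|]\ge\log\|\m\|$ on average, and the truncation at $\|\m\|$ costs at most the expected negative part. I expect bounding that negative part by $1/\dco$ uniformly in $\sigma$ to be the real obstacle; I would rely on the cited lemma's computation. The case $\dco=2$ needs separate care, where $\log\|\x\|$ is harmonic.
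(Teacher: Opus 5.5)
Your proposal is correct and follows the paper's route. Like the paper, you reduce to a one-step bound via the tower property, note that offspring with $\z_t\neq\mathbf{0}$ are always rejected on \textsc{LexicoSphereInt}, so the progress is at least $\log(\min\{\|\x_t\|,\|\m[t]\|\}/\|\m[t]\|)$, and then invoke \cite[Lemma~4.4]{akimoto:gecco:2018}; that lemma's hit-and-run bound $\log(\sin\theta)\,\mathbf{1}_{\{\theta\le\pi/2\}}$, with angle density proportional to $\sin^{\dco-2}\theta$, gives $-1/(2(\dco-1))\ge-1/\dco$ uniformly in $\sig[t]$ (the superharmonicity fallback is unnecessary and would not by itself control the truncated negative part).
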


Combining Propositions~\ref{proposition:drift:sig} and~\ref{proposition:drift:m}, we show the drift for the logarithm of the ratio between $\| \m[t] \|$ and $\sig[t]$ is a positive constant order with respect to $\din$ at least.
\begin{proposition}
    \label{proposition:drift:ratio}
    Suppose $\pinmut \in \Theta(1/\din)$, $\mm[0] = \mathbf{0}$, and $\sigma_0 = (\sigLB)^{\frac{K}{s-1}}$ for some $K \in \mathbb{Z}$.
    \new{Suppose $\sigLB$ is given by~\eqref{eq:sigLB}.}
    Then, for $\dco \geq 2$, it holds
    \begin{multline}
        \E\left[ \log\left( \frac{\| \m[t+s] \|}{\sig[t+s]} \right) - \log\left( \frac{\| \m[t] \|}{\sig[t]} \right)  \mid \mathcal{F}_t \right] \\
        \geq - \left( s \cdot O \left( \din^{- \gamma} \right) - \frac{s}{s - 1} \cdot \frac{1}{2^s} \right) \log \alpha - \frac{s}{\dco} \enspace.
    \end{multline}
    Additionally, there exist $D \in \mathbb{N}$ and $\epsilon > 0$ such that, for all $\din \geq D$ and 
    \begin{align}
        \label{eq:cond:dco}
        \dco > D_1
        \enspace \text{with} \enspace
        D_1 = \frac{1}{\log \alpha} \left( \frac{1}{s - 1} \cdot \frac{1}{2^s} - \epsilon \right)^{-1} \!,
    \end{align}
    it holds 
    \begin{multline}
        \E\left[ \log\left( \frac{\| \m[t+s] \|}{\sig[t+s]} \right) - \log\left( \frac{\| \m[t] \|}{\sig[t]} \right)  \mid \mathcal{F}_t \right] 
        \geq \frac{s}{D_1} - \frac{s}{\dco} > 0 \enspace.
    \end{multline}
\end{proposition}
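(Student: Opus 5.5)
The first inequality follows from linearity of conditional expectation applied to the decomposition
\begin{align*}
    \log\frac{\| \m[t+s] \|}{\sig[t+s]} - \log\frac{\| \m[t] \|}{\sig[t]}
    = \log\frac{\| \m[t+s] \|}{\| \m[t] \|} - \log\frac{\sig[t+s]}{\sig[t]} \enspace .
\end{align*}
We take $\E[\,\cdot \mid \mathcal{F}_t]$ of both sides. For the first term we invoke Proposition~\ref{proposition:drift:m} with $i = s$, which is valid since $\dco \geq 2$, and obtain the lower bound $-s/\dco$. For the second term we invoke Proposition~\ref{proposition:drift:sig}, whose hypotheses ($\pinmut \in \Theta(1/\din)$, $\mm[0]=\mathbf{0}$, $\sigma_0 = (\sigLB)^{K/(s-1)}$, $\sigLB$ as in~\eqref{eq:sigLB}) are exactly those assumed here. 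Negating the upper bound~\eqref{eq:drift:bound:sig} gives the lower bound $-\bigl(s\cdot O(\din^{-\gamma}) - \frac{s}{s-1}\cdot\frac{1}{2^s}\bigr)\log\alpha$. Summing the two bounds yields the first claimed inequality. Both conditional expectations must be finite, but this holds: $\log(\sig[t+s]/\sig[t])$ is bounded, and the log-ratio of $\|\m[]\|$ is bounded above by $0$ by elitism and has an integrable negative part, as already handled in Proposition~\ref{proposition:drift:m}.

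For the second part, we unpack the $O(\din^{-\gamma})$ term. It is a specific quantity $g(\din)$, depending only on $\din$ and on the constants $\alpha$ and $s$, with $0 \le g(\din) \le M \din^{-\gamma}$ for all $\din > x_0$. Because $\gamma > 0$, $g(\din) \to 0$. Fix any $\epsilon$ with $0 < \epsilon < \frac{1}{s-1}\cdot\frac{1}{2^s}$; this choice makes $D_1$ finite and positive. Then choose $D \in \mathbb{N}$ with $D > x_0$ and $M D^{-\gamma} \le \epsilon/(s-1)\cdot$(appropriate normalization). Concretely, we need $s\, g(\din) \le s\epsilon$, i.e.\ $g(\din)\le \epsilon$ for all $\din \ge D$. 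With this choice,
\begin{align*}
    -\Bigl(s\, g(\din) - \tfrac{s}{s-1}\cdot\tfrac{1}{2^s}\Bigr)\log\alpha \;\geq\; s\Bigl(\tfrac{1}{s-1}\cdot\tfrac{1}{2^s} - \epsilon\Bigr)\log\alpha = \frac{s}{D_1} \enspace ,
\end{align*}
so the drift is at least $\frac{s}{D_1} - \frac{s}{\dco}$. This is strictly positive exactly when $\dco > D_1$, which is condition~\eqref{eq:cond:dco}.

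The only delicate point is making sure the hidden constant in $O(\din^{-\gamma})$ from Proposition~\ref{proposition:drift:sig} is uniform in $t$, in the history $\mathcal{F}_t$, and in $\dco$. If it depended on the state (e.g.\ on $\sig[t]\langle\D[t]\rangle$) or on $\dco$, the choice of $D$ above would not be legitimate. I would check this by re-reading the proof of Proposition~\ref{proposition:drift:sig}: there the term arises from bounding $\Pr(\z_t=\mathbf{0}\mid\mathcal{F}_t)$ when $\sig[t]\langle\D[t]\rangle \ge \sigLB\,\alpha^{1/(s-1)}$, via Propositions~\ref{proposition:cdf-ratio} and~\ref{proposition:cdf-bound}. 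That bound depends only on $\pinmut$, hence on $\din$, and on $\alpha$ and $s$, so uniformity should hold. Granting this, the rest of the argument is the elementary algebra above.
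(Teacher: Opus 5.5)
Your proposal is correct and matches the paper's route. The paper obtains this proposition by combining Proposition~\ref{proposition:drift:sig} and Proposition~\ref{proposition:drift:m} (with $i=s$) through the same decomposition of the log-ratio, and the second part follows as you do it: fix $0<\epsilon<\frac{1}{(s-1)2^s}$ and take $D$ large enough that the $O(\cdot)$ term is at most $\epsilon$. Your uniformity concern resolves as you expect, because in the proof of Proposition~\ref{proposition:drift:sig} that term bounds $\Pr(N^{\mathrm{succ}}_{t,s}\geq 2\mid\mathcal{F}_t)$ and depends only on the number of integer variables (through the margin probability), on $\alpha$, and on $s$.
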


According to Proposition~\ref{proposition:drift:ratio}, we can derive the lower bound of the expected ratio $\E [ \| \m[s \tau] \| / \sig[s \tau] ] \in \Omega ( \exp( \tau ) )$. 
In addition, using the additive drift theorem~\cite{AdditiveDtift:ECJ:2003,AllDrift:arxiv:2024} (stated as Theorem~\ref{theorem:additive-drift} in Appendix~\ref{apdx:existing}), we will show $ \| \m[s \tau] \| / \sig[s \tau] \in \Omega ( \exp( \tau ) )$ with probability 1. 
The next proposition shows that there exists a constant upper bound of the variance of the logarithm of the ratio, which is one of the assumptions of the additive drift theorem.
\begin{proposition}
    \label{proposition:var:bound}
    Suppose $\din \geq 2$. Then, there exists $c > 0$ such that, for arbitrary $\m[t]$ and $\sig[t]$, it holds
    \begin{align}
        \mathrm{Var} \left[ \log\left( \frac{\| \m[t+s] \|}{\sig[t+s]} \right) \mid \mathcal{F}_t \right] \leq c \enspace.
    \end{align}
\end{proposition}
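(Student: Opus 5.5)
We need a uniform bound on $\mathrm{Var}[\log(\|\m[t+s]\|/\sig[t+s]) \mid \mathcal{F}_t]$. Conditionally on $\mathcal{F}_t$, the quantity $\log(\|\m[t]\|/\sig[t])$ is deterministic, so the target variance equals the variance of the increment
\begin{align*}
\Delta := \log\frac{\|\m[t+s]\|}{\|\m[t]\|} - \log\frac{\sig[t+s]}{\sig[t]} .
\end{align*}
Using $\mathrm{Var}[A-B]\le 2\,\E[A^2]+2\,\E[B^2]$, it suffices to bound the second moments of the two terms separately. The step-size term is easy: in each iteration $\log\sig$ changes by either $\log\alpha$ or $-\frac{1}{s-1}\log\alpha$. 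Over $s$ iterations, $|\log(\sig[t+s]/\sig[t])|\le s\log\alpha$ deterministically, so its second moment is at most $s^2(\log\alpha)^2$.

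For the mean term, I will telescope,
\begin{align*}
\log\frac{\|\m[t+s]\|}{\|\m[t]\|}=\sum_{k=0}^{s-1}\log\frac{\|\m[t+k+1]\|}{\|\m[t+k]\|},
\end{align*}
and apply Cauchy--Schwarz to get $\E[(\sum_k Y_k)^2]\le s\sum_k\E[Y_k^2]$. By elitism on \obj{}, once $\mm=\mathbf{0}$ the continuous part is accepted only if $\|\x_t\|\le\|\m[t]\|$. Hence each summand is
\begin{align*}
Y_k=\log\frac{\min\{\|\x_{t+k}\|,\|\m[t+k]\|\}}{\|\m[t+k]\|}\cdot\mathbf{1}_{\{\z_{t+k}=\mathbf{0}\}},
\end{align*}
so $Y_k\le 0$ and $Y_k^2\le \log^2(\min\{\|\x\|,\|\m[]\|\}/\|\m[]\|)$ with $\x\sim\mathcal{N}(\m[],\sig[]^2\mathbf{I}_{\dco})$. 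It therefore suffices to show that
\begin{align*}
g(\m[],\sig[]) := \E\left[\log^2\left(\frac{\min\{\|\x\|,\|\m[]\|\}}{\|\m[]\|}\right)\right]
\end{align*}
is bounded uniformly in $\m[]$ and $\sig[]$. By rotation and scale invariance, $g$ depends only on $r=\|\m[]\|/\sig[]$. Using the tower property over $\mathcal{F}_{t+k}$, the uniform bound then yields $\E[Y_k^2]\le \sup_r g$.

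The main obstacle is this uniform bound on $g$. I will write $\|\x\|/\|\m[]\|=\|r\mathbf{e}_1+\boldsymbol{\xi}\|/r$ and use
\begin{align*}
\E[\log^2(\min\{\cdot,1\})]=\int_0^\infty 2u\,\Pr\bigl(\|r\mathbf{e}_1+\boldsymbol{\xi}\|<re^{-u}\bigr)\,du .
\end{align*}
The standard Gaussian density is bounded by $(2\pi)^{-\dco/2}$, so the probability that $r\mathbf{e}_1+\boldsymbol{\xi}$ lies in a ball of radius $\rho=re^{-u}$ is at most $C_{\dco}\rho^{\dco}$. It is also trivially at most $1$.

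I will split into two regimes.
\begin{itemize}
\item For $r\le 1$, the bound $C_{\dco}r^{\dco}e^{-\dco u}$ makes the integral finite and uniformly bounded.
\item For $r>1$, a sharper estimate is needed, since the ball of radius $re^{-u}$ is centred at distance $r$ from the origin. The mass of the Gaussian in such a ball is at most $\min\{1,C\rho^{\dco}\}$ times $\exp(-(r-\rho)^2/2)$. This follows by bounding the density on the ball by its value at the nearest point, $(2\pi)^{-\dco/2}e^{-(r-\rho)^2/2}$.
\end{itemize}

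Splitting the $u$-integral at $u=\log 2$:
\begin{itemize}
\item For $u\ge\log 2$, we have $\rho\le r/2$, so the integrand is at most $2u\,C r^{\dco}e^{-\dco u}e^{-r^2/8}$, which is uniformly bounded in $r$.
\item For $u<\log 2$, the integrand is at most $2u$, which integrates to a constant.
\end{itemize}

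Combining the two regimes gives $\sup_r g<\infty$, and hence $c=2s^2\sup_r g+2s^2(\log\alpha)^2$ works. This constant is independent of $\m[t]$ and $\sig[t]$, as claimed.
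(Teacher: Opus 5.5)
Your proof is correct, but at the key step it takes a different route from the paper's.

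The paper proceeds as follows:
\begin{itemize}
\item It bounds the variance by $(\sqrt{V_{\boldsymbol{m},t}}+\sqrt{V_{\sigma,t}})^2$ and handles $V_{\sigma,t}$ with Popoviciu's inequality.
\item For the mean part, it expands $(R^{t+s'}_t)^2$ recursively and controls the cross terms with the drift bound of Proposition~\ref{proposition:drift:m}.
\item It bounds the one-step second moment by dominating the realized log-progress with that of the optimal step along the sampled direction. That quantity is $\log\sin\theta$, which depends only on the mutation angle $\theta$, so uniformity in $\|\boldsymbol{m}_t\|/\sigma_t$ comes for free.
\item A Wallis-integral computation and a correlation inequality finish the argument.
\end{itemize}

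You differ in two places. First, you use $\mathrm{Var}\le 2\mathbb{E}[A^2]+2\mathbb{E}[B^2]$ and Cauchy--Schwarz on the telescoping sum; this loses a factor $s$ but needs no drift lemma. Second, you obtain uniformity in $r=\|\boldsymbol{m}\|/\sigma$ by hand, from the tail formula and a small-ball estimate, with $e^{-r^2/8}$ absorbing $r^{d_{\mathrm{co}}}$ for large $r$.

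Each route has an advantage. Yours is more elementary and self-contained, and it even covers $d_{\mathrm{co}}=1$, where the paper's angle argument degenerates. The paper's comparison instead gives an explicit constant with mild dependence on $d_{\mathrm{co}}$, while the constant from your small-ball split grows roughly like $2^{d_{\mathrm{co}}}$. That growth is irrelevant here, since only existence of $c$ for fixed dimensions is needed.

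Two minor points:
\begin{itemize}
\item In $\min\{1,C\rho^{d_{\mathrm{co}}}\}e^{-(r-\rho)^2/2}$, the branch with factor $1$ does not follow from bounding the density on the ball; it needs a half-space tail bound. You never use it, so nothing breaks.
\item The bound $|\log(\sigma_{t+s}/\sigma_t)|\le s\log\alpha$ requires $s/(s-1)\le s$, i.e.\ $s\ge 2$. This holds because $s$ is an integer block length here.
\end{itemize}
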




Then, by applying Theorem~\ref{theorem:additive-drift}, we show that the search distribution prematurely converges with probability 1 on a high-dimensional problem. In the derivation, we consider the stochastic process $\bar{\theta}_\tau = (\m[s  \tau], \sig[s  \tau]) $ that consists of the stochastic process $\theta_t = (\m[t], \sig[t])$ for every $s$ iteration with the filtration $\bar{\mathcal{F}}_{\tau} = \mathcal{F}_{s \tau}$.
\begin{proposition}
    \label{proposition:ratio:as}
    Suppose $\pinmut \in \Theta(1/\din)$, $\mm[0] = \mathbf{0}$, and $\sigma_0 = (\sigLB)^{\frac{K}{s-1}}$ for some $K \in \mathbb{Z}$. \new{Suppose $\sigLB$ is given by~\eqref{eq:sigLB}.} Then, when $\dco$ satisfies~\eqref{eq:cond:dco}, there exist $D \in \mathbb{N}$ and $\delta > 0$ such that, for all $\din \geq D$, the following holds with probability 1:
    \begin{align}
        \log\left( \frac{\| \m[s \tau] \|}{\sig[s \tau]} \right) - \log\left( \frac{\| \m[0] \|}{\sig[0]} \right) \geq \tau \delta - o\left( \tau^{0.5 + \epsilon} \right)
    \end{align}
    for \rev{all $\tau \in \mathbb{N}$ and} all $\epsilon > 0$.
    In other words, with probability 1,
    \begin{align}
        \frac{\| \m[s \tau] \|}{\sig[s \tau]} \in \Omega \left( \exp \left( \tau\right) \right)
        \enspace.
    \end{align}
\end{proposition}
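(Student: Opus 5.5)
Define the one-dimensional process $X_\tau = \log\bigl(\|\m[s\tau]\| / \sig[s\tau]\bigr)$ on the coarse-grained chain $\bar{\theta}_\tau = (\m[s\tau], \sig[s\tau])$ with filtration $\bar{\mathcal{F}}_\tau = \mathcal{F}_{s\tau}$. Apply the additive drift theorem, Theorem~\ref{theorem:additive-drift}, to the increments $Y_\tau = X_{\tau+1} - X_\tau$. Since $\mm[0]=\mathbf{0}$, the lexicographic ranking keeps $\mm[t]=\mathbf{0}$ for all $t$. The dynamics of $(\m[t],\sig[t],\langle\D[t]\rangle)$ is therefore that of the fixed-integer regime, and both Proposition~\ref{proposition:drift:ratio} and Proposition~\ref{proposition:var:bound} apply at every block start $t=s\tau$. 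This holds in particular because $\sig[s\tau]$ stays on the lattice $(\sigLB)^{\mathbb{Z}/(s-1)}$, by the assumption on $\sigma_0$.

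The first step is the drift bound. By Proposition~\ref{proposition:drift:ratio}, there are $D\in\mathbb{N}$ and $\epsilon>0$ such that for $\din\geq D$ and $\dco>D_1$,
\begin{align}
\E[Y_\tau \mid \bar{\mathcal{F}}_\tau] \geq \delta := \frac{s}{D_1}-\frac{s}{\dco} > 0 .
\end{align}
This bound is uniform in $\bar{\theta}_\tau$. The second step is the variance bound. Proposition~\ref{proposition:var:bound} gives $\mathrm{Var}[X_{\tau+1}\mid\bar{\mathcal{F}}_\tau]\le c$. Because $X_\tau$ is $\bar{\mathcal{F}}_\tau$-measurable, this equals $\mathrm{Var}[Y_\tau\mid\bar{\mathcal{F}}_\tau]$, so the conditional variance of the increments is uniformly bounded.

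The third step builds the martingale. Set $M_\tau=\sum_{k<\tau}\bigl(Y_k-\E[Y_k\mid\bar{\mathcal{F}}_k]\bigr)$. This is a square-integrable martingale whose increments have conditional variance at most $c$. A strong law for martingales gives $M_\tau = o(\tau^{0.5+\epsilon})$ almost surely for every $\epsilon>0$. One route is to use $\sum_k \mathrm{Var}/k^{1+2\epsilon}<\infty$ together with Kronecker's lemma, applied to $M_\tau/\tau^{0.5+\epsilon}$. I expect this is exactly the content of the invoked version of Theorem~\ref{theorem:additive-drift}, as in~\cite{AllDrift:arxiv:2024}. Then
\begin{align}
X_\tau - X_0 = \sum_{k<\tau}\E[Y_k\mid\bar{\mathcal{F}}_k] + M_\tau \geq \tau\delta - o(\tau^{0.5+\epsilon})
\end{align}
almost surely, for all $\tau$. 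Exponentiating gives $\|\m[s\tau]\|/\sig[s\tau]\ge (\|\m[0]\|/\sig[0])\, e^{\tau\delta - o(\tau)}$. This is $\Omega(\exp(\tau))$ in the sense used in the statement, i.e., exponential in $\tau$ with rate $\delta$.

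The main obstacle is checking that the hypotheses of Theorem~\ref{theorem:additive-drift} are met \emph{uniformly}. First, the drift and variance bounds must hold for every state reachable at block boundaries, not just in expectation over the initial condition. Second, the variance bound must be stated for the increment itself, so finite second moments of $Y_\tau$ need to be confirmed. Finite second moments should follow because $\log(\sig[s(\tau+1)]/\sig[s\tau])$ is bounded by $s\log\alpha$. In addition, $\log(\|\m[s(\tau+1)]\|/\|\m[s\tau]\|)$ is at most $0$ and has a Gaussian-log lower tail with finite variance for $\dco\ge2$. I would verify both points first, then cite the theorem directly.
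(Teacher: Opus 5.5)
Your proposal is correct and follows the paper's argument: the paper applies the additive drift theorem (Theorem~\ref{theorem:additive-drift}) to the subsampled process $\bar{\theta}_\tau = (\m[s\tau], \sig[s\tau])$ with filtration $\bar{\mathcal{F}}_\tau = \mathcal{F}_{s\tau}$, taking the drift $\delta = s/D_1 - s/\dco$ from Proposition~\ref{proposition:drift:ratio} and the variance bound from Proposition~\ref{proposition:var:bound}, just as you do; your martingale and Kronecker step only unpacks what that theorem provides. One harmless slip: the step-size stays on $\sig[0]\,\alpha^{\mathbb{Z}/(s-1)}$, not on $(\sigLB)^{\mathbb{Z}/(s-1)}$, but the argument never needs this because you cite the two propositions directly.
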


Finally, we derive that there exists a general lower bound $C_\mathrm{lower} > 0$ of the expected distance $\E[ \| \m[s \tau] \| ]$ of the mean vector and the optimal solution for any $\tau \in \mathbb{N}$.

\begin{theorem}
    \label{theorem:premature:logm}
    Consider (1+1)-LB-ES with $1/s$-success rule on \obj{}.
    Suppose the same assumption as in Proposition~\ref{proposition:ratio:as}.
    Then, when $\dco$ satisfies~\eqref{eq:cond:dco}, 
    there exists $D \in \mathbb{N}$ such that, for all $\din \geq D$, 
    \begin{align}
        \label{eq:mlog:lower}
        \E\left[ \log \left( \frac{ \| \m[s \tau +1] \| }{\| \m[s \tau] \|} \right) \mid \mathcal{F}_{s \tau} \right] \in - O\left( \exp \left( - 3 \tau \right) \right) \enspace.
    \end{align}
    In addition, there exists a finite constant $C_\mathrm{lower} > 0$ such that, for any $\tau \in \mathbb{N}$, 
    \begin{align}
    \E\left[ \| \m[s \tau] \| \right] \geq C_\mathrm{lower} \cdot \| \m[0] \| \enspace.
\end{align}
\end{theorem}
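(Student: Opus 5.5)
We prove the first claim \eqref{eq:mlog:lower} by combining Proposition~\ref{proposition:ratio:as} with Proposition~\ref{proposition:chi-squared:inverse:trunc}, and the second by summing over all iterations.

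For a single iteration at time $t = s\tau$, note that $\|\m[s\tau+1]\| = \min\{\|\x_t\|, \|\m[s\tau]\|\}$ whenever $\z_t = \mathbf{0}$, and $\m[s\tau+1] = \m[s\tau]$ otherwise. This holds because on \obj{} with $\mm[t]=\mathbf{0}$, a candidate with nonzero integer part is always rejected. Since the log-ratio is nonpositive, we obtain
\begin{align*}
\E\left[ \log \frac{\|\m[s\tau+1]\|}{\|\m[s\tau]\|} \mid \mathcal{F}_{s\tau}\right] \geq \E\left[ \log \frac{\min\{\|\x_t\|,\|\m[s\tau]\|\}}{\|\m[s\tau]\|} \mid \mathcal{F}_{s\tau}\right].
\end{align*}
Here $\x_t \sim \mathcal{N}(\m[s\tau], \sig[s\tau]^2 \mathbf{I}_{\dco})$ conditionally. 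Condition \eqref{eq:cond:dco} should force $\dco \geq 4$ (because $D_1 \geq 2^s(s-1)/\log\alpha$ is typically large); if it does not, we add this as a mild extra requirement. Proposition~\ref{proposition:chi-squared:inverse:trunc} then bounds the right-hand side below by $-M(\|\m[s\tau]\|/\sig[s\tau])^{-3}$. By Proposition~\ref{proposition:ratio:as}, almost surely $\|\m[s\tau]\|/\sig[s\tau] \geq C e^{\tau}$ for a constant $C$. Strictly, that proposition gives $\exp(\tau\delta - o(\tau^{0.5+\epsilon}))$; we read the $\Omega(\exp(\tau))$ form as stated, or rescale $\tau$ and absorb $\delta$. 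Substituting gives $-O(\exp(-3\tau))$. A technical point is that the constant $C$ in Proposition~\ref{proposition:ratio:as} may be sample-path dependent. We handle this by working with the deterministic drift version: the lower bound $\tau\delta - o(\cdot)$ combined with the initial ratio. If needed, we use the additive-drift tail bound to make the bound uniform outside a null set.

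For the second claim, we telescope over all iterations, not only those at multiples of $s$:
\begin{align*}
\log\frac{\|\m[s\tau]\|}{\|\m[0]\|} = \sum_{\tau'=0}^{\tau-1}\sum_{j=0}^{s-1} \log\frac{\|\m[s\tau'+j+1]\|}{\|\m[s\tau'+j]\|}.
\end{align*}
For each intermediate iteration we apply the same argument: the step-size only grows by at most a factor $\alpha^{j}$ within a block, so $\|\m[s\tau'+j]\|/\sig[s\tau'+j] \geq \alpha^{-s}\|\m[s\tau']\|/\sig[s\tau']$ up to constants (since $\|\m\|$ is nonincreasing). The ratio therefore still grows like $e^{\tau'}$ and up to a constant, and each term has conditional expectation bounded below by $-M' e^{-3\tau'}$. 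Taking total expectations and summing the geometric series gives $\E[\log(\|\m[s\tau]\|/\|\m[0]\|)] \geq -M' s\sum_{\tau'} e^{-3\tau'} =: -L > -\infty$, uniformly in $\tau$. Jensen's inequality ($\exp$ is convex) then yields $\E[\|\m[s\tau]\|] \geq \|\m[0]\| \exp(\E[\log(\|\m[s\tau]\|/\|\m[0]\|)]) \geq e^{-L}\|\m[0]\|$, so we set $C_\mathrm{lower}=e^{-L}$.

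The main obstacle is passing from the almost-sure, path-dependent bound of Proposition~\ref{proposition:ratio:as} to a bound on $\E[(\|\m\|/\sig)^{-3}]$ that is summable and uniform over paths. Early $\tau'$ are harmless because each single-step drift is bounded below by $-1/\dco$ (Proposition~\ref{proposition:drift:m}), so only the tail matters. For the tail, we would first try to show that $\E[(\|\m[s\tau]\|/\sig[s\tau])^{-3}]$ decays geometrically. This would follow from a multiplicative supermartingale argument on $\exp(-3\log(\|\m\|/\sig))$, using the positive drift of Proposition~\ref{proposition:drift:ratio} together with the bounded per-block increments: the step-size changes are deterministic factors, and decreases of $\|\m\|$ are controlled in expectation.
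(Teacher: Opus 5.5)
Your route is the paper's own. For the first claim you insert the almost-sure growth of $\|\boldsymbol{m}_{s\tau}\|/\sigma_{s\tau}$ from Proposition~\ref{proposition:ratio:as} into the $-O((\|\boldsymbol{m}\|/\sigma)^{-3})$ bound of Proposition~\ref{proposition:chi-squared:inverse:trunc}; your reduction $\|\boldsymbol{m}_{t+1}\| \ge \min\{\|\boldsymbol{x}_t\|,\|\boldsymbol{m}_t\|\}$ is exactly right. For the second claim you telescope over every iteration, sum a geometric series and apply Jensen.

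One step fails as written: the within-block bound $\|\boldsymbol{m}_{s\tau'+j}\|/\sigma_{s\tau'+j} \ge \alpha^{-s}\|\boldsymbol{m}_{s\tau'}\|/\sigma_{s\tau'}$. Monotonicity $\|\boldsymbol{m}_{s\tau'+j}\| \le \|\boldsymbol{m}_{s\tau'}\|$ points the wrong way for a lower bound on the ratio. Together with $\sigma_{s\tau'+j} \ge \alpha^{-j/(s-1)}\sigma_{s\tau'}$ it only yields an \emph{upper} bound. No deterministic lower bound on $\|\boldsymbol{m}_{s\tau'+j}\|/\|\boldsymbol{m}_{s\tau'}\|$ exists, because an accepted sample can land arbitrarily close to the optimum. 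The paper's proof makes the same move when it deduces $\Theta(\cdot)$ from these two facts. The simplest repair is to run the argument of Proposition~\ref{proposition:ratio:as} on each of the $s$ shifted chains $(\boldsymbol{m}_{s\tau+j},\sigma_{s\tau+j})_{\tau}$, $j=0,\ldots,s-1$. Propositions~\ref{proposition:drift:ratio} and~\ref{proposition:var:bound} hold conditionally on an arbitrary $\mathcal{F}_t$, so each chain gets its own almost-sure $\Omega(e^{\delta\tau})$ growth.

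The obstacle you flag yourself is real, and the paper simply passes over it. It takes expectations of pathwise bounds $-O(e^{-3k})$ whose constants are random. Almost-sure finiteness of $\sum_t \min\{1,(\|\boldsymbol{m}_t\|/\sigma_t)^{-3}\}$ does not give finiteness of its expectation. Your proposed fix with exponent $3$ is not justified, though. A positive drift of $\log(\|\boldsymbol{m}\|/\sigma)$ with bounded variance does not by itself make $(\sigma/\|\boldsymbol{m}\|)^{3}$ contract in expectation.

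Use a small exponent instead. Combining $-1/d_{\mathrm{co}}$ from Proposition~\ref{proposition:drift:m} (for small ratios) with Proposition~\ref{proposition:chi-squared:inverse:trunc} (for large ones), the one-step expected log-progress is at least $-C(\|\boldsymbol{m}_t\|/\sigma_t)^{-\lambda}$ for any fixed $\lambda\in(0,3]$. So it suffices to show $\sum_t \mathbb{E}[(\|\boldsymbol{m}_t\|/\sigma_t)^{-\lambda}]<\infty$. Two facts give this:
\begin{itemize}
\item For small $\lambda'$, the moments $\mathbb{E}[(\|\boldsymbol{m}_t\|/\|\boldsymbol{m}_{t+1}\|)^{\lambda'}\mid\mathcal{F}_t]$ are bounded uniformly in $\sigma_t/\|\boldsymbol{m}_t\|$.
\item $\log\sigma$ moves by bounded amounts.
\end{itemize}
Let $\Delta_\tau$ be the block increment of $\log(\|\boldsymbol{m}\|/\sigma)$, whose conditional mean is at least $\delta>0$ by Proposition~\ref{proposition:drift:ratio}. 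A second-order expansion then gives $\mathbb{E}[e^{-\lambda\Delta_\tau}\mid\mathcal{F}_{s\tau}] \le 1-\lambda\delta/2$ for small $\lambda$. Hence $\mathbb{E}[(\|\boldsymbol{m}_{s\tau}\|/\sigma_{s\tau})^{-\lambda}]$ decays geometrically. The same moment bound transfers this to the intermediate iterations in expectation, which also makes the shifted-chain repair unnecessary. Your Jensen step then closes the argument with a deterministic $C_{\mathrm{lower}}$.
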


Theorem~\ref{theorem:premature:logm} reveals that the (1+1)-LB-ES failed to continue decreasing the distance between the mean vector and the optimal solution in expectation.
This implies that setting a lower bound of the standard deviation for integer variables is not sufficient to obtain linear convergence on high-dimensional mixed-integer optimization problems.

One may consider that reducing the lower bound $\sigLB$ can resolve the premature convergence of (1+1)-LB-ES. 
Although the smaller $\sigLB$ may resolve the premature convergence, we consider the optimization performance of the integer variable may be worsened.
We note that our analysis focuses on the optimization process that starts with the optimal integer variable, and the lower bound $\sigLB$ setting has been tuned to ensure the performance of the integer variable.





\subsection{Linear Convergence of (1+1)-LUB-ES}
\label{sec:lub-es}

For achieving linear convergence on high-dimensional mixed-integer optimization problems, this section investigates (1+1)-LUB-ES to evaluate the effect of the additional upper bound on integer handling and shows its linear convergence.

Our analysis is based on the analysis tools used for the runtime analysis of (1+1)-ES on the sphere function~\cite{akimoto:gecco:2018}.
We note that although the reference~\cite{akimoto:gecco:2018} focuses on the $1/5$-success rule, we can extend the analysis result to $1/s$-success rule.
According to~\cite{akimoto:gecco:2018}, we introduce the normalized step-size $\bar{\sigma}_t = \dco \cdot \sigma / \| \m \|$ and the normalized success probability with rate $r$ for continuous part as
\begin{align}
    p^{\mathrm{succ,co}}_{r,\dco}(\bar{\sigma}) &:= \Pr_{\x \sim \mathcal{N}(\m[], \sig^2 \mathbf{I}_{\dco}) } \left( \| \x \| \leq (1 - r) \| \m[] \| \right) \\
    &= \Pr\left( \left\| \boldsymbol{e}_1 + \frac{ \bar{\sigma} }{ \dco } \mathcal{N} \right\| \leq 1 - r \right) \enspace,
\end{align}
where $\boldsymbol{e}_1 = (1, 0, \cdots, 0)$ and $\mathcal{N}$ is a $\dco$-dimensional vector obeying the standard normal distribution.
\citet[Lemma~3.1,~3.2]{akimoto:gecco:2018} show several properties of $p^{\mathrm{succ,co}}_{r,\dco}(\bar{\sigma})$. For example, it is strictly monotonically decreasing.
We also develop the potential function $V(\theta)$ for $\theta = \{ \m[], \sig[], \langle \D[] \rangle \}$ as
\begin{align}
    \label{eq:potential}
    V(\theta) = V_{\mathrm{co}}(\theta) + v_{\mathrm{in}} \cdot \log \left( \frac{ \sig[] \langle \D[] \rangle }{ \sigLB } \right) \enspace,
\end{align}
where $V_{\mathrm{co}}(\theta)$ is the potential function defined in~\cite{akimoto:gecco:2018} as
\begin{multline}
    V_{\mathrm{co}}(\theta) = \log (\| \m[] \|) \\
    + v \cdot \log^{+}_{\max} \left\{  
    \frac{\alpha \cdot \ell \cdot \| \boldsymbol{m} \|}{ \dco \cdot \sigma} , 
    \frac{\alpha^{\frac{1}{s-1}} \cdot \dco \cdot \sigma}{ u \cdot \| \boldsymbol{m} \| } \right\} \enspace,
\end{multline}
where $\log^{+}_{\max}\{a,b\} = \max\{0, \log (a), \log(b)\}$.
The parameters $\ell$ and $u$ are determined using two probabilities $p_u$ and $p_\ell$ satisfying $u / \ell > \alpha^{\frac{s}{s-1}}$ and $0 < p_u < 1/s < p_{\ell} <  1 / 2$ such that $p^{\mathrm{succ,co}}_{0,\dco}(u) = p_u$ and $p^{\mathrm{succ,co}}_{0,\dco}(\ell) = p_\ell$. 
Because $V(\theta_t) \geq \log (\| \m \|)$, we will show the hitting time of $\log (\| \m \|)$ by deriving the hitting time of $V(\theta_t)$ using the truncated drift theorem~\cite[Theorem~2.1]{akimoto:gecco:2018} (stated as Theorem~\ref{theorem:truncated-drift} in Appendix~\ref{apdx:existing}).

We derive the upper bound of the truncated drift $\mathbb{E}[ \max\{ V(\theta_{t+1}) - V(\theta_{t}), -A \} \mid \mathcal{F}_t ]$ for $A > 0$ by considering the first and second terms of the potential function in~\eqref{eq:potential}.
The first term can be treated using the truncated drift for $V_{\mathrm{co}}(\theta)$, which is derived in~\cite{akimoto:gecco:2018}.
We then investigate the second term to obtain the upper bound of the truncated drift for $V(\theta)$.
Considering the modified update rule of $\langle \D[t] \rangle$ in lines \ref{algo:line:upper-1} and \ref{algo:line:upper-2} of Algorithm~\ref{alg:mixint-ES-LUB}, the following proposition is obviously obtained.
\begin{proposition}
    \label{proposition:integer-mutation:decrease}
    Consider (1+1)-LUB-ES with $1/s$-success rule.
    Then, $\sig[t] \langle \D[t] \rangle$ is monotonically decreasing as
    \begin{align}
        - \frac{\log (\alpha)}{s-1} \leq \log \left( \frac{ \sig[t+1] \langle \D[t+1] \rangle }{ \sig[t] \langle \D[t] \rangle } \right) \leq 0 \enspace.
    \end{align}
\end{proposition}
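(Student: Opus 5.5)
The plan is a direct case analysis on the update rule for $\langle \D[t+1] \rangle$ in Algorithm~\ref{alg:mixint-ES-LUB}, applied to the product $\sig[t+1]\langle \D[t+1]\rangle$. Since $\mm[0]=\mathbf{0}$ and \obj{} ranks the integer part lexicographically first, any accepted candidate must satisfy $\|\z_t\|^2 \le \|\mm[t]\|^2 = 0$. Hence $\z_t = \mathbf{0} = \mm[t]$ for all $t$, and the condition $[\mm[t]]_i \neq [\z_t]_i$ in the success branch is never triggered. Consequently, in every iteration (success or failure) the update is the one in lines~\ref{algo:line:upper-1} and~\ref{algo:line:upper-2}:
\begin{align}
  \sig[t+1]\langle \D[t+1]\rangle = \max\left\{\sigLB, \min\left\{\sig[t+1]\langle \D[t]\rangle, \sig[t]\langle \D[t]\rangle\right\}\right\}.
\end{align}
Throughout I also use the invariant $\sig[t]\langle\D[t]\rangle \ge \sigLB$, which holds for $t\ge1$ by the outer $\max$ and which I assume for $t=0$, as is implicit in $\sigLB$ being a lower bound on the initial setting.

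For the upper bound, note that $\min\{\sig[t+1]\langle\D[t]\rangle, \sig[t]\langle\D[t]\rangle\} \le \sig[t]\langle\D[t]\rangle$ and $\sigLB \le \sig[t]\langle\D[t]\rangle$. Taking the maximum of two quantities, each at most $\sig[t]\langle\D[t]\rangle$, gives $\sig[t+1]\langle\D[t+1]\rangle \le \sig[t]\langle\D[t]\rangle$, so the log-ratio is $\le 0$.

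For the lower bound, the step-size satisfies $\sig[t+1] \ge \alpha^{-1/(s-1)}\sig[t]$ in either branch. Since $\alpha>1$, the success branch gives $\sig[t+1] = \alpha\sig[t] \ge \sig[t]$, which is larger still. Therefore
\begin{align}
  \min\{\sig[t+1]\langle\D[t]\rangle, \sig[t]\langle\D[t]\rangle\} \ge \alpha^{-\frac{1}{s-1}}\sig[t]\langle\D[t]\rangle,
\end{align}
and the outer $\max$ with $\sigLB$ only increases the value. Taking logarithms yields the stated two-sided bound.

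The only point needing care is the claim that the integer part never changes, which is what rules out the other branch. This follows immediately from the lexicographic ranking together with $\mm[0]=\mathbf{0}$, by induction on $t$. I would state the argument for $\langle \D[t]\rangle$ as a common scalar, as noted in Section~\ref{ssec:problems}. Even without that fact, the conclusion holds coordinatewise whenever the line~\ref{algo:line:upper-1}/\ref{algo:line:upper-2} update applies.
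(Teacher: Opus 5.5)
Your proposal is correct and follows the paper's route: the paper simply reads the two-sided bound off the updates in lines~\ref{algo:line:upper-1} and~\ref{algo:line:upper-2} of Algorithm~\ref{alg:mixint-ES-LUB}, using $\alpha^{-\frac{1}{s-1}}\sig[t] \le \sig[t+1]$ together with the $\min$ against $\sig[t]\langle \D[t]\rangle$. You also make explicit two assumptions that the paper's one-line justification leaves implicit and without which the upper bound can fail: on \obj{} with $\mm[0]=\mathbf{0}$, every accepted candidate has $\z_t=\mathbf{0}$, so the uncapped branch for $[\mm[t]]_i \neq [\z_t]_i$ never fires; and the bound at $t=0$ needs $\sig[0]\langle \D[0]\rangle \ge \sigLB$.
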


For analysis, we introduce the normalized success probabilities for the integer part and for the next candidate solution as
\begin{align}
    p^{\mathrm{succ}}_{\mathrm{in}}( \sigma_{\mathrm{in}} ) &:= \Pr \left( \Int\left[  \sigma_{\mathrm{in}}\mathcal{N}_{\mathrm{in}}\right] = \boldsymbol{0} \right) \\
    &= \left( 1 - 2 \Phi \left( - \frac{1}{2 \sigma_{\mathrm{in}}} \right) \right)^{\din} \enspace,
\end{align}
where $\mathcal{N}_{\mathrm{in}}$ is a $\din$-dimensional vector obeying the standard normal distribution.
In the following, $p^{\mathrm{succ}}_{\mathrm{in}}(\sigLB) = ( 1 - \new{\pinmut} )^{\din}$ is denoted as $p^{\mathrm{succ}}_{\mathrm{in,LB}}$.
Then, Propositions~\ref{proposition:cdf-ratio} and~\ref{proposition:cdf-bound} show
\begin{align}
    \label{eq:psucc:in:cases}
    p^{\mathrm{succ}}_{\mathrm{in}}(\sigma_{\mathrm{in}}) = \begin{cases}
        p^{\mathrm{succ}}_{\mathrm{in,LB}} & \text{if } \sigma_{\mathrm{in}} = \sigLB \\
        O \left( \din^{- \gamma} \right) & \text{if } \sigma_{\mathrm{in}} \geq \sigLB \cdot \alpha^{\frac{1}{s-1}} 
    \end{cases}
    \enspace.
\end{align}
This implies that the elitist solution is rarely updated until it holds $ \sig[t] \langle \D[t] \rangle = \sigLB$.
We note that generating a non-optimal integer leads to an unsuccessful update, where $V_\mathrm{co}$ is updated as
\begin{align}
    V_\mathrm{co}(\theta_{t+1}) &= V_\mathrm{co}(\{ \m[t], \sig \cdot \alpha^{- \frac{1}{s-1}} \}) =: V^{\downarrow}_\mathrm{co}(\theta_{t}) \enspace.
\end{align}
In addition, because the continuous and integer parts are independently generated, and because the generation process of the continuous part is the same as (1+1)-ES, when $\sig[t] \langle \D[t] \rangle = \sigLB$, we have
\begin{align}
\begin{split}
    &\mathbb{E}[ \max\{ V(\theta_{t+1}) - V(\theta_{t}), -A \} \mid \mathcal{F}_t ] \\
    &= \mathbb{E}_{\mathrm{ES}} \left[ \max\left\{ V_\mathrm{co}(\theta_{t+1}) - V_\mathrm{co}(\theta_{t}), -A \right\} \mid \mathcal{F}_t \right] \cdot p^{\mathrm{succ}}_{\mathrm{in,LB}} \\
    &\qquad + \max\left\{ V^{\downarrow}_\mathrm{co}(\theta_{t}) - V_\mathrm{co}(\theta_{t}), -A \right\} \cdot \left( 1 - p^{\mathrm{succ}}_{\mathrm{in,LB}} \right)
    \enspace,
\end{split}
\end{align}
where $\mathbb{E}_{\mathrm{ES}}$ is the expectation under the update of (1+1)-ES.
When $\sig[t] \langle \D[t] \rangle \geq \sigLB \cdot \alpha^{- \frac{1}{s-1}}$, the second term in~\eqref{eq:potential} decreases in expectation.
Then, we have the upper bound of the truncated drift as follows.
\begin{proposition}
    \label{proposition:integer-mutation:drift}
    Consider (1+1)-LUB-ES with $1/s$-success rule on \obj{}.
    Suppose $\pinmut \in \Theta(1/\din)$, $\mm[0] = \mathbf{0}$, and $\sigma_0 = (\sigLB)^{\frac{K}{s-1}}$ for some $K \in \mathbb{Z}$. 
    \new{Suppose $\sigLB$ is given by~\eqref{eq:sigLB} and $\pinmut \in O(1/\dco)$.}
    Then, if $v$ and $A$ fulfill $0 < v < \min\{1, A_{\mathrm{co}} / \log (\alpha) \}$, it holds
    \begin{align}
        \mathbb{E}\left[ \max\{ V(\theta_{t+1}) - V(\theta_{t}), -A \} \mid \mathcal{F}_t \right] \leq - B \enspace,
    \end{align}
    where 
    \begin{align}
        A &= A_{\mathrm{co}} + \frac{v_\mathrm{in}}{s-1} \cdot \log (\alpha) \\
        B &= \min \left\{ B_1, B_2, B_3, B_4 \right\}
    \end{align}
    defined with
    \begin{align*}
        & B_1 = \frac{v_{\mathrm{in}} - v}{s-1} \cdot \log (\alpha) \cdot \left( 1 - O \left( \din^{- \gamma} \right) \right) - \min\{B_2', B_3', B_4'\} \cdot O \left( \din^{- \gamma} \right) \\
        & B_2' = \left( A_\mathrm{co} \cdot p^\ast - \frac{s}{s-1} \cdot v \cdot \log(\alpha) \right) 
        \enspace, \quad 
        B_2 = B_2' \cdot p^{\mathrm{succ}}_{\mathrm{in,LB}} \\
        & B_3' = \frac{v \cdot \log(\alpha)}{s-1} \cdot \left( s \cdot p_{\ell}  - 1 \right) \\
        & B_3 = \frac{v \cdot \log(\alpha)}{s-1} \cdot \left( s \cdot p_{\ell} \cdot p^{\mathrm{succ}}_{\mathrm{in,LB}} - 1 \right) \\
        & B_4' = \frac{v \cdot \log(\alpha)}{s-1} \cdot ( 1 - s \cdot p_u ) 
        \enspace, \quad 
        B_4 = B_4' \cdot p^{\mathrm{succ}}_{\mathrm{in,LB}}
    \end{align*}
    and $p^\ast = \min_{\bar{\sigma} \in [\ell,u]} \left\{ p^{\mathrm{succ,co}}_{r^{\ast},\dco}(\bar{\sigma}) \right\}$ with $r^{\ast} = 1 - \exp\left( - \frac{A_{\mathrm{co}}}{1-v} \right)$.
\end{proposition}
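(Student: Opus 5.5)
The plan is to split the analysis according to the value of the integer standard deviation $\sigma_{\mathrm{in},t} := \sig[t]\langle \D[t] \rangle$. By Proposition~\ref{proposition:integer-mutation:decrease}, this quantity never increases. Since $\sigma_0$ is a power of $(\sigLB)^{1/(s-1)}$ and all updates are multiplicative by $\alpha$ or $\alpha^{-1/(s-1)}$, we have either $\sigma_{\mathrm{in},t} = \sigLB$ or $\sigma_{\mathrm{in},t} \geq \sigLB \alpha^{1/(s-1)}$. We bound the truncated drift of $V = V_{\mathrm{co}} + v_{\mathrm{in}} \log(\sigma_{\mathrm{in}}/\sigLB)$ separately in these two cases. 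In both cases we use $\max\{a+b,-A_{\mathrm{co}}-A_{\mathrm{in}}\} \leq \max\{a,-A_{\mathrm{co}}\} + \max\{b,-A_{\mathrm{in}}\}$ with $A_{\mathrm{in}} = v_{\mathrm{in}}\log(\alpha)/(s-1)$. This splits the continuous and integer contributions, and the integer one is never truncated because of the lower bound in Proposition~\ref{proposition:integer-mutation:decrease}.

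\emph{Case $\sigma_{\mathrm{in},t} = \sigLB$.} The integer term equals $0$ and cannot decrease, and it stays $0$ under the update: on failure it is clipped at $\sigLB$, and on success with $\z_t = \mathbf{0}$ the min in line~\ref{algo:line:upper-1} keeps it at $\sigLB$. So only $V_{\mathrm{co}}$ matters. Using the decomposition displayed before the proposition, the drift is $p^{\mathrm{succ}}_{\mathrm{in,LB}}\,\mathbb{E}_{\mathrm{ES}}[\cdot] + (1-p^{\mathrm{succ}}_{\mathrm{in,LB}})(V^{\downarrow}_{\mathrm{co}} - V_{\mathrm{co}})$. We then repeat the three-region argument of \cite{akimoto:gecco:2018} for the $1/s$-rule: $\bar\sigma \in [\ell,u]$, $\bar\sigma < \ell$, and $\bar\sigma > u$.

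In the middle region, success with rate $r^\ast$ has probability at least $p^\ast$. This gives a truncated decrease of $A_{\mathrm{co}}p^\ast$, while the $v$-term can grow by at most $\frac{s}{s-1}v\log\alpha$. Multiplying by $p^{\mathrm{succ}}_{\mathrm{in,LB}}$ gives $B_2$, using that $V^{\downarrow}_{\mathrm{co}} - V_{\mathrm{co}} \leq v\log(\alpha)/(s-1)$ after the appropriate bookkeeping.

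For $\bar\sigma<\ell$, the penalty term decreases by $v\log\alpha$ on success and increases by $v\log(\alpha)/(s-1)$ on failure. A failure now happens with probability $1-p_\ell p^{\mathrm{succ}}_{\mathrm{in,LB}}$, which yields $B_3$. A failure caused by the integer part shrinks $\sigma$, pushing $\bar\sigma$ further in the wrong direction, so the effective success probability is $s p_\ell p^{\mathrm{succ}}_{\mathrm{in,LB}}$. Note that $B_3$ needs $s p_\ell p^{\mathrm{succ}}_{\mathrm{in,LB}} > 1$; this is where $\pinmut \in O(1/\dco)$ and $p^{\mathrm{succ}}_{\mathrm{in,LB}} = (1-\pinmut)^{\din}$ being bounded below enter, though as stated $B$ may be nonpositive otherwise. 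For $\bar\sigma>u$, integer failures only help, since they shrink $\sigma$. The bound $B_4'$ comes from the ES part and is weighted by $p^{\mathrm{succ}}_{\mathrm{in,LB}}$, while the remaining mass contributes a decrease of $-v\log(\alpha)/(s-1)$, so $B_4$ holds. The condition $v<A_{\mathrm{co}}/\log\alpha$ guarantees that truncation at $-A_{\mathrm{co}}$ never clips the deterministic $V^{\downarrow}$ move.

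\emph{Case $\sigma_{\mathrm{in},t} \geq \sigLB\alpha^{1/(s-1)}$.} By \eqref{eq:psucc:in:cases}, success has probability $O(\din^{-\gamma})$. With probability $1-O(\din^{-\gamma})$ the step fails, so $\sigma$ is multiplied by $\alpha^{-1/(s-1)}$ and the integer term drops by exactly $v_{\mathrm{in}}\log(\alpha)/(s-1)$. This is because the min picks $\sig[t+1]\langle\D[t]\rangle \geq \sigLB$. Meanwhile $V_{\mathrm{co}}$ changes by $V^{\downarrow}_{\mathrm{co}}-V_{\mathrm{co}} \leq v\log(\alpha)/(s-1)$. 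On the rare success event, the truncated $V_{\mathrm{co}}$ change is bounded by the $1$-iteration ES bounds, namely at most $-\min\{B_2',B_3',B_4'\}$ in magnitude, with the integer term nonincreasing. Combining the two events gives $B_1$.

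The main obstacle is getting the case $\sigma_{\mathrm{in},t}=\sigLB$ right. There we must adapt the three-region analysis of \cite{akimoto:gecco:2018} from $1/5$ to $1/s$, and verify that forced integer failures interact with the penalty term correctly in each region. This is especially delicate for $\bar\sigma<\ell$, where the integer failures push $\bar\sigma$ the wrong way.
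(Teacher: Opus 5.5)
Your plan has the same architecture as the paper's proof. You split the truncation with $A_{\mathrm{in}}=v_{\mathrm{in}}\log(\alpha)/(s-1)$, use the dichotomy on $\sigma_{\mathrm{in},t}$, get $B_1$ from the rare event $\boldsymbol{z}_t=\mathbf{0}$, and get $B_2,B_3,B_4$ from the three $\bar\sigma$-regions. Write $p=p^{\mathrm{succ}}_{\mathrm{in,LB}}$. Your regions $\bar\sigma<\ell$ and $\bar\sigma>u$ are correct, since $-pB_3'+(1-p)\frac{v\log\alpha}{s-1}=-B_3$ and $-pB_4'-(1-p)\frac{v\log\alpha}{s-1}\le -B_4$.

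\textbf{The middle region does not close.} The bound $V^{\downarrow}_{\mathrm{co}}-V_{\mathrm{co}}\le v\log(\alpha)/(s-1)$ that you invoke is attained inside $[\ell,u]$. For $\ell\le\bar\sigma\le\alpha\ell$, the second argument of $\log^{+}_{\max}$ is at most $\alpha^{s/(s-1)}\ell/u<1$. So the active term is $\log(\alpha\ell/\bar\sigma)$ both before and after the shrink $\sigma\mapsto\alpha^{-1/(s-1)}\sigma$, and $V_{\mathrm{co}}$ increases by exactly $v\log(\alpha)/(s-1)$. Combined with $\mathbb{E}_{\mathrm{ES}}[\cdot]\le -B_2'$, your ingredients give only
\[ -p\,B_2'+(1-p)\,\frac{v\log\alpha}{s-1}, \]
which is strictly larger than $-B_2=-p\,B_2'$ whenever $p<1$. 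No bookkeeping of these two bounds removes the positive term. You would need a sharper, $\bar\sigma$-dependent bound on $\mathbb{E}_{\mathrm{ES}}$ near $\bar\sigma=\ell$.

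The paper closes this step with \eqref{eq:V-cond}, which asserts that a forced failure raises $V_{\mathrm{co}}$ only when $\bar\sigma<\ell$. The computation above at $\bar\sigma=\ell$ contradicts that claim. So you have reproduced a defect in the paper's own argument rather than missed an idea it contains.

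What both arguments actually establish is the statement with $B_2$ replaced by
$p\,B_2'-(1-p)\frac{v\log\alpha}{s-1}=A_{\mathrm{co}}\,p^{\ast}p-v\log\alpha\,\bigl(p+\frac{1}{s-1}\bigr)$. Take the parameter choices of Theorem~\ref{theorem:integer-mutation:ert} and use $p^{\ast}\ge p'$. Then this is at least $\frac{p'}{2d_{\mathrm{co}}}\bigl(p-\frac{1}{s-1}\bigr)$. That quantity is positive and $\Theta(1/d_{\mathrm{co}})$, because $s>2/p$ gives $(s-1)p>2-p\ge 1$. So the theorem survives the correction.

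Two smaller repairs are also needed.
\begin{itemize}
\item In the $B_1$ case, condition on $\boldsymbol{z}_t=\mathbf{0}$ versus $\boldsymbol{z}_t\neq\mathbf{0}$, not on success versus failure. The quantity $-\min\{B_2',B_3',B_4'\}$ bounds the ES expectation given $\boldsymbol{z}_t=\mathbf{0}$, not the change on the success event. On $\boldsymbol{z}_t\neq\mathbf{0}$ the total change is at most $(v-v_{\mathrm{in}})\log(\alpha)/(s-1)$. On $\boldsymbol{z}_t=\mathbf{0}$ the change of the integer term is nonpositive and can be dropped.
\item The dichotomy on $\sigma_{\mathrm{in},t}$ needs $\sigma_{\mathrm{in},0}=\sigma_{\mathrm{LB}}\,\alpha^{k/(s-1)}$ for some integer $k\ge 0$. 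This does not follow from $\sigma_0$ being a power of $\sigma_{\mathrm{LB}}^{1/(s-1)}$. The paper relies on the same dichotomy just as implicitly.
\end{itemize}
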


Finally, tuning the parameters in the potential, we derive the linear convergence of (1+1)-LUB-ES in the next theorem.

\begin{theorem}
    \label{theorem:integer-mutation:ert}
    Suppose $\dco \geq 2$ and there exist $D_{p} \in \mathbb{N}$ and $0 < \bar{p}_\mathrm{in} < 1$ such that $p^{\mathrm{succ}}_{\mathrm{in,LB}} \geq \bar{p}_\mathrm{in} \in \Theta(1)$ for all $\din \geq D_{p}$. Suppose $s$ is given to fulfill $s > 2 /  p^{\mathrm{succ}}_{\mathrm{in,LB}}$.
    Let $p' = \min_{\bar{\sigma} \in [\ell,u]} \left\{ p^{\mathrm{succ,co}}_{r',\dco}(\bar{\sigma}) \right\}$ and $r' = 1 - \exp\left( - \frac{ \log (\alpha) }{\dco \cdot \log (\alpha) - 1} \right)$. 
    Then, for 
    \begin{align}
        A_\mathrm{co} = \frac{1}{\dco}, \quad v = \frac{p'}{2 \dco \cdot \log(\alpha)}, \quad \text{and} \quad v_{\mathrm{in}} = 2 v \enspace,
    \end{align}
    there exist $D \in \mathbb{N}$ satisfying the following for all $\din \geq \max\{ D_{p}, D\}$:
    \begin{itemize}[left=5pt]
        \item $B > 0$ and $B \in \Theta( 1 / \dco)$.
        \item For any $T \in \mathbb{N}$, $\mathbb{E}[ \log ( \| \m[T] \| )] \leq  V(\theta_0) - B \cdot T$.
        \item For any $\epsilon \in \mathbb{R}$, the expectation of the hitting time $T_\epsilon := \min \{ t: \|\m\| \leq \epsilon \}$ holds
        \begin{align}
            \mathbb{E}\left[ T_\epsilon \right] \leq \frac{V(\theta_0) - \log(\epsilon) + \frac{1}{\dco} + \frac{p'}{\dco \cdot (s-1)}}{B} \enspace,
        \end{align}
        in particular, $\mathbb{E}\left[ T_\epsilon \right] \in \Theta(\dco \cdot \log(1/\epsilon))$.
    \end{itemize}
\end{theorem}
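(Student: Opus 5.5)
The plan is to plug the stated choices of $A_\mathrm{co}$, $v$, and $v_\mathrm{in}$ into Proposition~\ref{proposition:integer-mutation:drift}, check that $B=\min\{B_1,\dots,B_4\}$ is positive and of order $1/\dco$, and then invoke the truncated drift theorem (Theorem~\ref{theorem:truncated-drift}) as in~\cite{akimoto:gecco:2018}.

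\textbf{Step 1: admissibility of the parameters.} With $A_\mathrm{co}=1/\dco$ and $v=p'/(2\dco\log\alpha)$ we need $0<v<\min\{1,A_\mathrm{co}/\log\alpha\}$. This holds because $p'<1$ gives $v<1/(2\dco\log\alpha)<A_\mathrm{co}/\log\alpha$. For $v<1$ we additionally use $\dco\ge2$ and $p'\le 1/2$, or simply take $\alpha$ so that $2\dco\log\alpha>1$; I will check this remark carefully.

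\textbf{Step 2: relate $r^\ast$ to $r'$.} Compute $r^\ast=1-\exp(-A_\mathrm{co}/(1-v))$. Since $v\le 1/(\dco\log\alpha)$, we get $A_\mathrm{co}/(1-v)\le \log\alpha/(\dco\log\alpha-1)$, so $r^\ast\le r'$. Because $p^{\mathrm{succ,co}}_{r,\dco}$ is decreasing in $r$, this gives $p^\ast\ge p'$.

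\textbf{Step 3: lower-bound each $B_i$.}
\begin{itemize}
\item $B_2'\ge p'/\dco-\frac{s}{s-1}\cdot\frac{p'}{2\dco}$. This is positive only when $s/(2(s-1))<1$, i.e.\ $s>2$, which follows from $s>2/p^{\mathrm{succ}}_{\mathrm{in,LB}}$. Then $B_2=B_2'p^{\mathrm{succ}}_{\mathrm{in,LB}}\ge B_2'\bar p_\mathrm{in}\in\Theta(1/\dco)$.
\item $B_3>0$ requires $s\,p_\ell\,p^{\mathrm{succ}}_{\mathrm{in,LB}}>1$. Since $p_\ell$ is free in $(1/s,1/2)$, choose $p_\ell$ close to $1/2$. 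Then $s p_\ell p^{\mathrm{succ}}_{\mathrm{in,LB}}>\tfrac{s}{2}p^{\mathrm{succ}}_{\mathrm{in,LB}}\cdot(1-\eta)>1$ by the hypothesis $s>2/p^{\mathrm{succ}}_{\mathrm{in,LB}}$ (strictly, with slack $\eta$). This is exactly where that hypothesis is needed. The result is a $\Theta(v)=\Theta(1/\dco)$ positive constant, uniform once $\din\ge D_p$ via $\bar p_\mathrm{in}$.
\item $B_4=\Theta(v)\bar p_\mathrm{in}>0$ since $p_u<1/s$.
\item $B_1=\frac{v}{s-1}\log\alpha\,(1-O(\din^{-\gamma}))-\Theta(1/\dco)\,O(\din^{-\gamma})$, using $v_\mathrm{in}-v=v$. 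Choosing $D$ large so the $O(\din^{-\gamma})$ terms are below half the leading term makes $B_1\in\Theta(1/\dco)$.
\end{itemize}
Hence $B>0$ and $B\in\Theta(1/\dco)$.

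\textbf{Step 4: conclude.} Proposition~\ref{proposition:integer-mutation:drift} now yields truncated drift at most $-B$. Truncation only increases the increment, so $\E[V(\theta_{t+1})-V(\theta_t)\mid\mathcal F_t]\le -B$ as well. Telescoping gives $\E[\log\|\m[T]\|]\le\E[V(\theta_T)]\le V(\theta_0)-BT$, using $V\ge\log\|\m\|$; the term $v_\mathrm{in}\log(\sig\langle\D\rangle/\sigLB)\ge0$ by the lower bound.

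For the hitting time, apply Theorem~\ref{theorem:truncated-drift} to $V$ with target $\log\epsilon$. This gives $\E[T_\epsilon]\le (V(\theta_0)-\log\epsilon+A)/B$. Here $A=A_\mathrm{co}+\frac{v_\mathrm{in}}{s-1}\log\alpha=\frac1\dco+\frac{p'}{\dco(s-1)}$, which is the stated bound. The upper bound is then $O(\dco\log(1/\epsilon))$. The matching lower bound $\Omega(\dco\log(1/\epsilon))$ comes from Proposition~\ref{proposition:drift:m}, via a standard additive-drift lower bound on $\log\|\m\|$ with per-step expected decrease at most $1/\dco$.

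\textbf{Main obstacle.} The delicate step is $B_3$ together with the compatibility of all constraints: $p_\ell<1/2$, $u/\ell>\alpha^{s/(s-1)}$, and $s\,p_\ell\,p^{\mathrm{succ}}_{\mathrm{in,LB}}>1$. Showing that some admissible $p_\ell,p_u$ exist, uniformly in $\din$, is where I expect the most care to be needed.
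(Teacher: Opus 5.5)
Your proposal is correct and follows the paper's proof essentially step for step. You substitute the parameters into Proposition~\ref{proposition:integer-mutation:drift}, use $r^\ast\le r'$ (so $p^\ast\ge p'$) to get $B_2,B_3,B_4\ge\frac{p'}{\dco}\,c$ with $c>0$ once $s>2/p^{\mathrm{succ}}_{\mathrm{in,LB}}$ and $p_\ell$ is near $1/2$, and absorb the $O(\din^{-\gamma})$ terms in $B_1$. You then telescope with $\log\|\boldsymbol{m}_T\|\le V(\theta_T)$ and apply Theorem~\ref{theorem:truncated-drift} with $A=\frac{1}{\dco}+\frac{p'}{\dco(s-1)}$. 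The lower bound comes, as in the paper, from the hit-and-run estimate of \cite[Lemma~4.4]{akimoto:gecco:2018}, i.e.\ Proposition~\ref{proposition:drift:m}, which applies equally to (1+1)-LUB-ES.

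The loose ends are minor:
\begin{itemize}
\item $\dco\log\alpha>1$ is already implicit in the definition of $r'$ and in your Step~2, and it immediately gives $v<1/2$.
\item Your ``main obstacle'' is harmless. Since $p^{\mathrm{succ,co}}_{0,\dco}$ decreases strictly to $0$, after fixing $p_\ell\in(\max\{1/s,1/(s\,p^{\mathrm{succ}}_{\mathrm{in,LB}})\},1/2)$ you can take $p_u$ small enough that $u/\ell>\alpha^{s/(s-1)}$.
\item The claim $v\in\Theta(1/\dco)$, and hence $B\in\Theta(1/\dco)$, needs $p'$ bounded away from $0$ uniformly in $\dco$. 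You assert this without justification; the paper obtains it from $\dco r'\to1$ and the limits in \cite[Lemma~3.2]{akimoto:gecco:2018}.
\end{itemize}
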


\section{Numerical Experiments}
\begin{figure*}[t]
    \centering
    \includegraphics[width=0.99\linewidth]{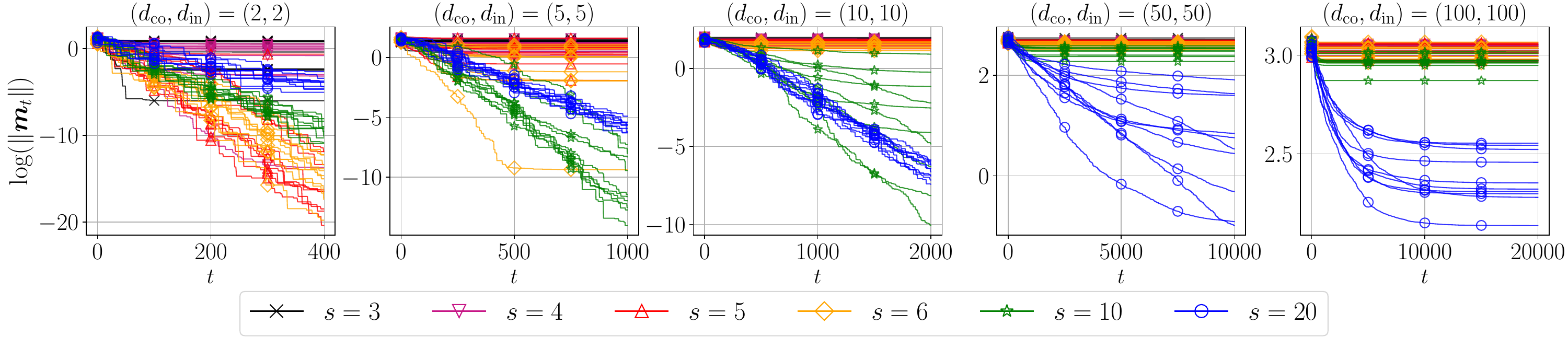}
    \vspace{-2mm}
    \caption{Logarithmic distance to the optimum $\log (\| \boldsymbol{m}_t \|)$ for (1+1)-LB-ES on \textsc{LexicoSphereInt}.}
    \label{fig:LB}
\end{figure*}

\begin{figure*}[t]
    \centering
    \includegraphics[width=0.99\linewidth]{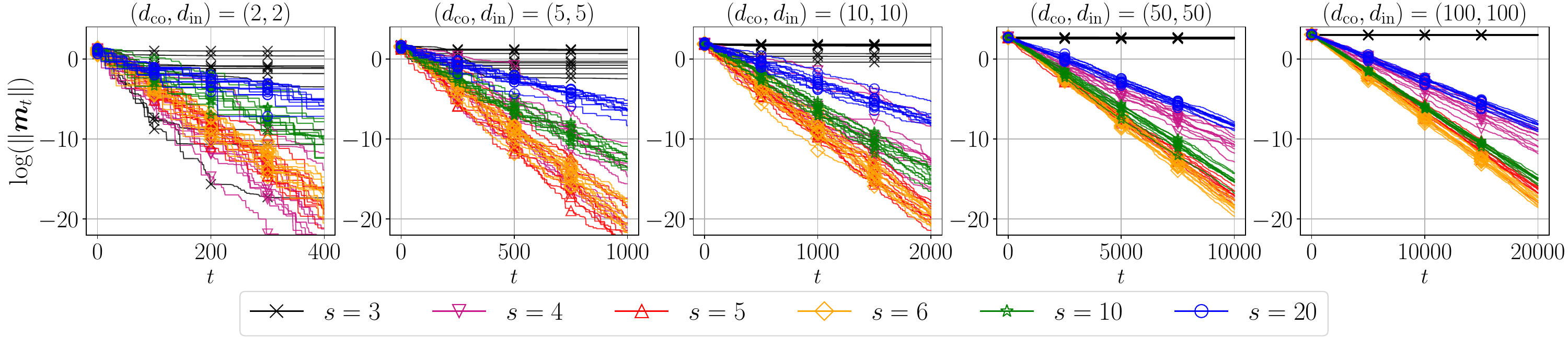}
    \vspace{-2mm}
    \caption{Logarithmic distance to the optimum $\log (\| \boldsymbol{m}_t \|)$ for (1+1)-LUB-ES on \textsc{LexicoSphereInt}.}
    \label{fig:LUB}
\end{figure*}

\begin{figure*}[t]
    \centering
    \includegraphics[width=0.95\linewidth]{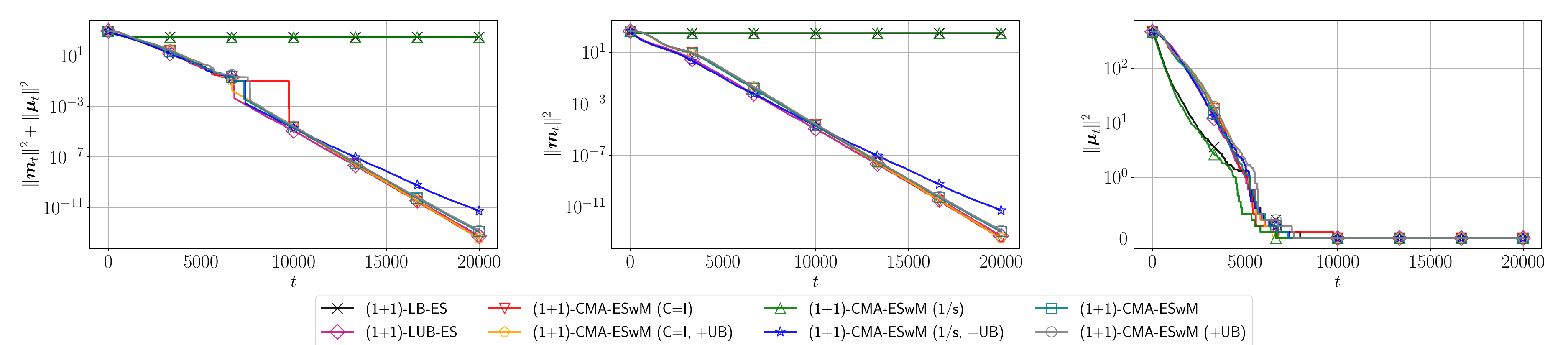}
    \vspace{-1mm}
    \caption{The average $\| \m \|^2 + \| \mm \|^2$, $\| \m \|^2$, and $\| \mm \|^2$ on \textsc{SphereInt} with $(\dco, \din) = (100, 100)$ over 10 independent trials.}
    \label{fig:SphereInt}
\end{figure*}

In this section, we investigate the influence of the finite number of dimensions and the parameter $s$ in (1+1)-LB-ES and (1+1)-LUB-ES through empirical evaluations.
\rev{The goal of the experiments is to illustrate the finite-dimensional behavior predicted by the theory and to visualize the premature-convergence mechanism.}
Figure~\ref{fig:LB} shows the logarithm of the distance $\|\m[t]\|$ between the mean vector of (1+1)-LB-ES and the optimal solution on \textsc{LexicoSphereInt} for continuous variables, varying $(\dco, \din)$ and $s$ with $\alpha=1.5$ and $\pinmut = 1/(\dco+\din)$.
For each combination of $(\dco, \din)$ and $s$, ten independent runs were conducted, and the trajectories from all runs are presented in Figure~\ref{fig:LB}.
In each run, the initial mean vector $\m[0]$ was uniformly sampled from $[1, 3]^{\dco}$, and the other initial values were set to $\mm[0]=\boldsymbol{0}$, $\sig[0]=1$ and $\langle \D[0] \rangle_1 = \cdots = \langle \D[0] \rangle_{\din} = 1$.
We can see that increasing the dimension leads to premature convergence, and this observation is consistent with Theorem~\ref{theorem:premature:logm}.

Figure~\ref{fig:LUB} shows the results for (1+1)-LUB-ES.
The experimental settings for the (1+1)-LUB-ES were the same as those described previously for the (1+1)-LB-ES.
These results indicate that (1+1)-LUB-ES can avoid premature convergence by setting $s$ greater than a certain value, which is consistent with the condition $s > 2 /  p^{\mathrm{succ}}_{\mathrm{in,LB}}$ established in Theorem~\ref{theorem:integer-mutation:ert}. 
Considering that, in the present case where $\dco = \din$, we have $2 /  p^{\mathrm{succ}}_{\mathrm{in,LB}} = 2/(1-\frac{1}{\dco+\din})^{\din} \leq 4$, the derived condition on $s$ may be considered practical for hyperparameter tuning.
Moreover, by observing the slope of the plot, it can be seen that (1+1)-LUB-ES exhibits linear convergence, and the expected first hitting time increases proportionally with $\dco$.

To complement our theoretical analysis, we empirically investigate the behavior on the standard \textsc{SphereInt} function, examining the optimization process starting from non-optimal integer values.
We also investigate (1+1)-CMA-ES with Margin ((1+1)-CMA-ESwM) and its variants to clarify the conditions leading to premature convergence.
Specifically, we test variants substituting the step-size adaptation with the $1/s$-success rule (\textsf{1/s}) and disabling covariance matrix adaptation (\textsf{C=I}), applying the upper bound strategy on the margin correction (\textsf{+UB}) to each.
For $1/s$-success rule, we set $s=5$.
In each run, $\m[0]$ and $\mm[0]$ were uniformly sampled from $[1, 3]^{\dco}$ and $\{1, 2, 3\}^{\din}$, respectively.
Figure~\ref{fig:SphereInt} shows the trajectories of the average best evaluation values.
We can see that (1+1)-LB-ES suffers from premature convergence in the continuous variables despite finding the optimal integers.
In contrast, (1+1)-LUB-ES achieves linear convergence.
Notably, for (1+1)-CMA-ESwM, stagnation is avoided either by employing the original step-size adaptation with smoothed success probabilities or by incorporating the upper bound.
This indicates that preventing excessive step-size increases triggered by inferior integer mutations is the essential factor in resolving the premature convergence issue.


\section{Conclusion}

We have theoretically investigated the runtime of (1+1)-LB-ES and (1+1)-LUB-ES with $1/s$-success update on \obj{} starting with the optimal integer value.
Theorem~\ref{theorem:premature:logm} reveals that (1+1)-LB-ES suffers from premature convergence, i.e., there exists a finite constant $C_\mathrm{lower} > 0$ such that $\E\left[ \| \m[s \tau] \| \right] \geq C_\mathrm{lower} \cdot \| \m[0] \|$ for any $\tau \in \mathbb{N}$. On the other hand, Theorem~\ref{theorem:integer-mutation:ert} shows that the runtime of (1+1)-LUB-ES for finding an $\epsilon$-neighbor is $\mathbb{E}\left[ T_\epsilon \right] \in \Theta(\dco \cdot \log(1/\epsilon))$. 
These results imply that the additional upper bound, which is incorporated into (1+1)-LUB-ES, is essential for high-dimensional mixed-integer optimization.
Theorem~\ref{theorem:integer-mutation:ert} also reveals the condition on the target success rate $1/s$ required for the linear convergence. The numerical simulation shows premature convergence of the (1+1)-LB-ES even with a moderate number of dimensions, whereas (1+1)-LUB-ES converges to the optimal solution with suitable settings for $s$.

As future work, we will conduct the runtime analysis for integer variables, which has the potential to ensure the optimization performance of mixed-integer optimization by combining the results derived in this paper.
In addition, investigating the effect of the hyperparameters, such as $s$ and $\sigLB$, on the convergence rate is a promising direction for future work, which may lead to the performance improvement of (1+1)-LUB-ES. We also believe that our analysis approach is applicable to mixed-category optimization methods~\cite{CatCMA:2024,CatCMAwM:2025}, potentially broadening the impact of this research.

\begin{acks}
This study was partially funded by JSPS KAKENHI (JP23K28156, JP23H00491, and JP24K20857) and JST ACT-X (JPMJAX24C7).
Generative AI tools were used to improve the English writing of this manuscript.
\end{acks}

\bibliographystyle{ACM-Reference-Format}
\bibliography{sample-base}

\appendix

\setcounter{secnumdepth}{1} 
\section{Existing Theorems and Propositions} \label{apdx:existing}
\begin{theorem}[Proposition~3.1 in~\cite{AdditiveDtift:ECJ:2003}, Theorem~5.8 in~\cite{AllDrift:arxiv:2024}]
    \label{theorem:additive-drift}
    Let $(X_t)_{t\in\mathbb{N}}$ be an integrable random process over $\mathbb{R}$ with $X_0 = 0$. Furthermore, suppose that there are $\delta > 0$ and $c > 0$ such that, for all $t \in \mathbb{N}$, it holds the following:
    \begin{align}
        \mathbb{E}[ X_t - X_{t+1} \mid \mathcal{F}_t ] &\geq \delta \\
        \mathrm{Var}[ X_t - X_{t+1} \mid \mathcal{F}_t ] &\leq c \enspace.
    \end{align}
    Then, for all $\epsilon > 0$, the following holds with probability 1.
    \begin{align}
        X_t \geq t \delta - o\left( t^{0.5 + \epsilon} \right) 
    \end{align}
\end{theorem}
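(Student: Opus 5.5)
We read the drift condition in the direction in which it is used in Proposition~\ref{proposition:ratio:as}: the increments $D_t := X_{t+1} - X_t$ satisfy $\mathbb{E}[D_t \mid \mathcal{F}_t] \geq \delta$ and $\mathrm{Var}[D_t \mid \mathcal{F}_t] \leq c$. As printed, the statement says $\mathbb{E}[X_t - X_{t+1} \mid \mathcal{F}_t] \geq \delta$. With that sign the conclusion $X_t \geq t\delta - o(t^{0.5+\epsilon})$ cannot hold (for example, take $X_{t+1} = X_t - \delta$). So we prove the statement for positive drift. The plan is a Doob decomposition of $X_t$ into a predictable part, which grows at least like $t\delta$, and a martingale part. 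We then show that the martingale part is almost surely $o(t^{0.5+\epsilon})$ via a martingale strong law of large numbers.

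\textbf{Decomposition.} Define $\mu_i := \mathbb{E}[D_i \mid \mathcal{F}_i]$ and the martingale differences $\Delta_i := D_i - \mu_i$. Integrability of the process guarantees both are well defined. Since $X_0 = 0$, telescoping gives
\begin{align}
X_t = \sum_{i=0}^{t-1} \mu_i + M_t, \qquad M_t := \sum_{i=0}^{t-1} \Delta_i .
\end{align}
The drift assumption gives $\sum_{i<t} \mu_i \geq t\delta$ pointwise, so $X_t \geq t\delta + M_t$. It therefore suffices to show $M_t / t^{0.5+\epsilon} \to 0$ almost surely for every $\epsilon > 0$. Then $-M_t$ is an $o(t^{0.5+\epsilon})$ term; the implied constant is path-dependent, which is the sense intended in the statement.

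\textbf{Martingale strong law.} Fix $\epsilon > 0$ and consider the weighted martingale
\begin{align}
N_t := \sum_{i=1}^{t} \frac{\Delta_i}{i^{0.5+\epsilon}} .
\end{align}
Its increments are orthogonal and satisfy $\mathbb{E}[\Delta_i^2 \mid \mathcal{F}_i] = \mathrm{Var}[D_i \mid \mathcal{F}_i] \leq c$. Hence
\begin{align}
\sup_t \mathbb{E}[N_t^2] \leq \sum_{i \geq 1} \frac{c}{i^{1+2\epsilon}} < \infty ,
\end{align}
so $N_t$ is bounded in $L^2$. By the $L^2$ martingale convergence theorem, $N_t$ converges almost surely to a finite limit. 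Kronecker's lemma, applied with the weights $b_i = i^{0.5+\epsilon} \uparrow \infty$, then yields
\begin{align}
\frac{1}{t^{0.5+\epsilon}} \sum_{i=1}^{t} \Delta_i \to 0 \quad \text{almost surely.}
\end{align}
The single term $\Delta_0$ is almost surely finite, so it does not affect this limit. Finally, take a countable sequence $\epsilon_k \downarrow 0$ and intersect the corresponding full-measure events. This gives the conclusion simultaneously for all $\epsilon > 0$, since the statement for a smaller $\epsilon$ implies it for every larger one.

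\textbf{Main obstacle.} The only delicate step is making the martingale part rigorous. One must ensure that the conditional moments are finite, so that $\Delta_i$ is square-integrable; this follows from the variance bound together with integrability. One must also invoke the convergence theorem and Kronecker's lemma in the right form. Everything else is bookkeeping.
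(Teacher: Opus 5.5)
Your proof is correct. The paper gives no proof of this statement; it imports it from the cited sources. Your argument is the standard martingale strong law that underlies those results: a Doob decomposition $X_t=\sum_{i<t}\mu_i+M_t$, the $L^2$ martingale convergence theorem applied to $\sum_i \Delta_i/i^{0.5+\epsilon}$, and Kronecker's lemma. In effect you have made the appendix self-contained, and you also obtain the full-measure event simultaneously for all $\epsilon>0$.

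You are right about the sign. As printed, the hypothesis $\mathbb{E}[X_t-X_{t+1}\mid\mathcal{F}_t]\ge\delta$ contradicts the conclusion: your example $X_{t+1}=X_t-\delta$ refutes it for every $\epsilon\le 1/2$. Proposition~\ref{proposition:ratio:as} invokes the theorem with the positive drift from Proposition~\ref{proposition:drift:ratio}, so it needs exactly your reading $\mathbb{E}[X_{t+1}-X_t\mid\mathcal{F}_t]\ge\delta$. Equivalently, one can keep the printed hypothesis and change the conclusion to $X_t\le -t\delta+o(t^{0.5+\epsilon})$.

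One bookkeeping nit concerns the first step. The paper writes $\mathbb{N}_0$ when it wants to include $0$, so ``for all $t\in\mathbb{N}$'' presumably imposes the hypotheses only for $t\ge 1$. In that case you cannot use $\mu_0\ge\delta$ in $\sum_{i<t}\mu_i\ge t\delta$. The fix is immediate: $\mu_0$ and $\Delta_0$ are almost surely finite, so $X_t\ge (t-1)\delta+\mu_0+M_t$. The constant $\mu_0-\delta$ is then absorbed into the $o(t^{0.5+\epsilon})$ term, exactly as you already did for $\Delta_0$.
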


\begin{theorem}[Theorem~2.1 in~\cite{akimoto:gecco:2018}]
    \label{theorem:truncated-drift}
    Let $(X_t)_{t\in\mathbb{N}}$ be an integrable random process over $\mathbb{R}$ with $X_0 = x_0$. 
    For $A > 0$, let $(Y^A_t)_{t\in\mathbb{N}}$ be a truncated process defined iteratively as
    \begin{align}
        Y^A_0 = x_0 
        \quad \text{and} \quad
        Y^A_{t+1} = Y^A_t + \max \left\{ X_{t+1} - X_t, - A \right\}
        \enspace.
    \end{align}
    For $\beta < x_0$, let $T^X_\beta = \min \{ t: X_t \leq \beta \}$ be the first hitting time of the set $(-\infty, \beta]$.
    If there exist $A,B > 0$ such that $Y^A_t$ is integrable, i.e., $\mathbb{E}[ | Y^A_t | ] < \infty$, and
    \begin{align}
        \mathbb{E} \left[ Y^A_{t+1} - Y^A_{t} \mid \mathcal{F}_t \right] 
        &= \mathbb{E} \left[ \max \left\{ X_{t+1} - X_t, -A \right\} \mid \mathcal{F}_t \right] \\
        &\leq - B \enspace,
    \end{align}
    then the expectation of $T^X_{\beta}$ satisfies
    \begin{align}
        \mathbb{E} \left[ T^X_{\beta} \right] 
        \leq \mathbb{E} \left[ T^{Y^A}_{\beta} \right]
        \leq \frac{x_0 - \beta + A}{B} 
        \enspace.
    \end{align}
\end{theorem}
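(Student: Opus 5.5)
The proof has two parts. The first inequality follows from a pathwise comparison between $X$ and the truncated process $Y^A$. The second is an additive-drift argument in which the truncation bounds the overshoot below the target level $\beta$.

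\textbf{Pathwise comparison.} For every $t$ we have $\max\{X_{t+1}-X_t,-A\} \geq X_{t+1}-X_t$, so $Y^A_{t+1}-Y^A_t \geq X_{t+1}-X_t$. Since $Y^A_0 = X_0 = x_0$, induction on $t$ gives $Y^A_t \geq X_t$ for all $t$, almost surely. Hence whenever $Y^A_t \leq \beta$ we also have $X_t \leq \beta$. This gives $T^X_\beta \leq T^{Y^A}_\beta$ pointwise, and taking expectations yields the first inequality.

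\textbf{Drift bound for $Y^A$.} Write $T := T^{Y^A}_\beta$; it is a stopping time with respect to $\{\mathcal{F}_t\}$. Define the stopped process
\begin{align}
Z_t := Y^A_{t\wedge T} + B\,(t\wedge T).
\end{align}
On the event $\{T > t\}$, which is $\mathcal{F}_t$-measurable, the drift assumption gives $\mathbb{E}[Z_{t+1}-Z_t \mid \mathcal{F}_t] = \mathbb{E}[Y^A_{t+1}-Y^A_t \mid \mathcal{F}_t] + B \leq 0$. On $\{T \leq t\}$ the increment is $0$. Each $Z_t$ is integrable because $Y^A_{t\wedge T}$ is a finite sum of the integrable variables $Y^A_0,\dots,Y^A_t$ weighted by indicators. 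So $Z_t$ is a supermartingale, and therefore $\mathbb{E}[Y^A_{t\wedge T}] + B\,\mathbb{E}[t\wedge T] \leq x_0$ for every $t$.

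\textbf{Overshoot bound.} This is the only place the truncation matters, and it is the key step. For $t < T$ we have $Y^A_t > \beta$ by the definition of $T$. At $t = T$, truncation gives $Y^A_T \geq Y^A_{T-1} - A > \beta - A$. In both cases $Y^A_{t\wedge T} \geq \beta - A$, and substituting into the supermartingale inequality gives
\begin{align}
B\,\mathbb{E}[t\wedge T] \leq x_0 - \beta + A .
\end{align}
Letting $t\to\infty$, monotone convergence gives $\mathbb{E}[T] \leq (x_0-\beta+A)/B$. In particular $T$ is finite almost surely.

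Without truncation, the jump $X_T - \beta$ could be arbitrarily negative, and this argument would fail. That is why the theorem is stated for $Y^A$ and transferred to $X$ through the pathwise comparison. The remaining details are routine measurability checks.
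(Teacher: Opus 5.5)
Your proof is correct and complete. This paper gives no proof of the statement; it quotes it from Akimoto et al.\ as an existing result. Your argument is the standard one behind that cited theorem: the pathwise domination $Y^A_t \geq X_t$, the stopped supermartingale $Y^A_{t\wedge T} + B\,(t\wedge T)$, the overshoot bound $Y^A_{t\wedge T} > \beta - A$ that the truncation provides, and monotone convergence.
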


\section{Proofs of Propositions} \label{apdx:proofs-propositions}

This section provides the proofs of propositions in our analysis.
The following subsections show the proofs for Sections~\ref{sec:properties:gaussian}, \ref{sec:lb-es}, and \ref{sec:lub-es}, respectively.

\subsection{Proof of Propositions about Normal Distribution} \label{apdx:sec:proof:normal}

\begin{proof}[Proof of Proposition~\ref{proposition:gaussian:cdf-ratio}]
    Denoting 
    \begin{align}
        h(\theta) = \Phi(-\theta) \cdot \exp( \theta^2 / 2 ) \enspace,
    \end{align}
    the claim can be written as $h(\theta) < h(\theta / \beta) $. Therefore, the claim holds if $\log h$ is monotonically decreasing on $(0, \infty)$. Because we have
    \begin{align}
        \frac{\mathrm{d} \log h(\theta)}{\mathrm{d} \theta} = - \frac{\Phi'(-\theta)}{\Phi(-\theta)} + \theta = - \frac{\exp(- \theta^2 / 2)}{\int_{-\infty}^{\rev{\theta}} \exp(- t^2 / 2) \mathrm{d}t} + \theta \enspace,
    \end{align}
    the inequality $\frac{\mathrm{d} \log h(\theta)}{\mathrm{d} \theta} < 0$ holds when
    \begin{align}
        g(\theta) := \int_{-\infty}^{\rev{\theta}} \exp \left( - \frac{t^2}{2} \right) \mathrm{d}t - \frac{1}{\theta} \exp \left(- \frac{\theta^2}{2} \right) < 0 \enspace.
    \end{align}
    We can see that $\lim_{\theta \to - \infty} g(\theta) = 0$ and $g(\theta)$ is monotonically increasing as
    \begin{align}
        \frac{\mathrm{d} g(\theta)}{\mathrm{d} \theta} = \frac{1}{\theta^2} \exp \left( - \frac{\theta^2}{2} \right) > 0 \enspace.
    \end{align}
    This finishes the proof.
\end{proof}


\begin{proof}[Proof of Proposition~\ref{proposition:cdf-bound}]
    First, we show the upper bound of $\Phi^{-1}(y)$. For an invertible lower bound $g(x)$ for $\Phi(x)$, we have $\Phi^{-1}(y) < g^{-1}(y)$. A lower bound of $1 - \Phi(x)$ for $x > 0$ is given by~\cite{MillsRatio:1941} as
    \begin{align} 
        1 - \Phi(x) \geq \frac{x}{x^2+1} \cdot \phi(x) \enspace,
    \end{align}
    where $\phi(x) = \frac{1}{\sqrt{2 \pi}} \exp \left( - \frac{x^2}{2} \right)$.
    Considering $1 - \Phi(x) = \Phi(-x)$ and $\phi(x) = \phi(-x)$, for $x < -1$,
    \begin{align}
        \Phi(x) \geq \frac{-x}{(-x)^2+1} \cdot \phi(x) \geq \sqrt{\frac{e}{8\pi}} \exp ( - x^2 )
    \end{align}
    and we set as $g(x) = \sqrt{\frac{e}{8\pi}} \exp ( - x^2 )$. For $g$ on $(- \infty, -1)$, its inverse function is given by
    \begin{align}
        g^{-1}(y) = - \sqrt{ - \log \left( \sqrt{\frac{8\pi}{e}} \cdot y \right) } \enspace.
    \end{align}
    We note that, because $\exp( z^2 )$ is monotonically decreasing for $z < -1$, we have
    \begin{align}
        \exp\left( (\Phi^{-1}(y))^2 \right) 
        &\geq \exp\left( (g^{-1}(y))^2 \right) \\
        &= \exp\left( - \log \left( \sqrt{\frac{8\pi}{e}} y \right) \right) \\
        &= \left( \sqrt{\frac{8\pi}{e}} \cdot y \right)^{- 1}
    \end{align}
    This is the end of the proof.
\end{proof}

\begin{proof}[Proof of Proposition~\ref{proposition:chi-squared:inverse:trunc}]
    We note that $\log (1 + x) \geq \frac{x}{1 + x}$ for $-1 < x$ (see~\cite{log-bound:2007} for example).
    Because \del{$0 < \| \m[t+1] \|^2 / \| \m[t] \|^2 \leq 1$}{}\new{$0 < \min\{ \| \x \|, \|\m[]\| \} / \| \m[] \| \leq 1$}, we have
    \begin{align*}
        \log \left( \frac{\min\{ \| \x \|, \|\m[]\| \}}{\| \m[] \|} \right)
        &= \log \left( \frac{\| \x \|}{\| \m[] \|} \right) \cdot \mathbf{1}_{\{ \| \x \| \leq \| \m[] \| \}} \\
        &= \frac{1}{2} \cdot \log \left( \frac{\| \x \|^2}{\| \m[] \|^2} \right) \cdot \mathbf{1}_{\{ \| \x \| \leq \| \m[] \| \}} \notag \\
        & \geq \frac{1}{2} \cdot \frac{\| \m[] \|^2}{\| \x \|^2} \cdot \left( \frac{\| \x \|^2}{\| \m[] \|^2} - 1 \right) \cdot \mathbf{1}_{\{ \| \x \| \leq \| \m[] \| \}} 
        \\
        & = \frac{1}{2} \cdot \left( 1 - \frac{\| \m[] \|^2}{\| \x \|^2} \right) \cdot \mathbf{1}_{\{ \| \x \| \leq \| \m[] \| \}} 
        \enspace.
    \end{align*}

    \def\thefootnote{\fnsymbol{footnote}}
    In the following, we denote $\x = \m[] + \sig[] \cdot \bxi$ with $\bxi \sim \mathcal{N}(\mathbf{0}, \mathbf{I}_d)$ and $Z = \| \x \|^2 / \sig[]^2 = \left\| \frac{ \m[] }{ \sig[] } + \bxi \right\|^2$. Then, the distribution of $Z$ follows the non-central $\chi^2$-distribution with degree of freedom $k = d$ and non-centrality parameter $\lambda = \| \m[] \|^2 / \sig[]^2$.
    The probability density function of the non-central $\chi^2$-distribution is 
    \begin{align}
        f_{\mathrm{n}}(Z; k, \lambda) =  \exp \left( - \frac{\lambda}{2} \right) \cdot \sum^\infty_{j=0} \frac{(\lambda / 2)^j}{j !} \cdot f_{\mathrm{c}}(Z; k + 2j) \enspace,
    \end{align}
    where $f_{\mathrm{c}}(z; k) = \frac{2^{- k / 2}}{\Gamma(k/2)} z^{k/2 -1} e^{- z/2}$ is the probability density function of the $\chi^2$-distribution with degree of freedom $k$, defined with the gamma function $\Gamma(x) = \int^\infty_{t=0} t^{x-1} e^{\rev{-t}} \mathrm{d} t$. 
    Then, denoting $A_j = \frac{d}{2} + j$ and $B = \frac{\lambda}{2}$, we have
    \begin{align}
        &\mathbb{E} \left[ \left( 1 - \frac{\lambda}{Z} \right) \cdot \mathbf{1}_{\{ Z \leq \lambda \}} \right] \notag \\
        &\quad = \int_{0}^\lambda \left( 1 - \frac{\lambda}{Z} \right) \cdot f_{\mathrm{n}}(Z; k, \lambda) \mathrm{d}Z \\ 
        &\quad \overset{\footnotemark[2]{}}{=} e^{ - B} \cdot \sum^\infty_{j=0} \frac{B^j}{j !} \cdot \frac{ \gamma(A_j, B) - B \cdot \gamma(A_j - 1, B) }{ \Gamma ( A_j ) } \enspace,
    \end{align}
    where $\gamma$ is the incomplete gamma function.\footnotetext[2]{When $\{ f_i \}$ is a set of non-negative measurable functions, the integral and the infinite sum can be interchanged, i.e., $\sum^\infty_{i=0} \int f_i(x) \mathrm{d}x = \int \sum^\infty_{i=0} f_i(x) \mathrm{d}x$.}
    Because $\gamma(a+1,x) = a \cdot \gamma(a,x) - x^a e^{-x}$, it holds
    \begin{align}
        \gamma(A_j - 1, B) 
        &= \frac{\gamma(A_j, B) + B^{A_j-1} e^{-B}}{A_j - 1} \\
        &= \frac{\gamma(A_j + 1, B) + B^{A_j} e^{-B}}{(A_j - 1) A_j} + \frac{B^{A_j-1} e^{-B}}{A_j - 1}
        \enspace.
    \end{align}
    Because $A_j - 1 \geq j+1$ for $d \geq 4$ and $A_j \cdot \Gamma ( A_j ) = \Gamma ( A_j +1 )$, we have
    \begin{align}
        \begin{split}
            & \mathbb{E} \left[ \left( 1 - \frac{\lambda}{Z} \right) \cdot \mathbf{1}_{\{ Z \leq \lambda \}} \right] \geq e^{-B} \cdot \sum^\infty_{j=0} \frac{B^j}{j !} \cdot \frac{ \gamma(A_j, B) }{ \Gamma ( A_j ) } \\
            &\qquad - e^{-B} \cdot \sum^\infty_{j=0} \frac{B^j}{j !} \cdot \frac{B}{j+1} \cdot \frac{ \gamma(A_j + 1, B) }{ \Gamma ( A_j + 1 ) } \\
            &\qquad  - e^{-2B} \cdot \sum^\infty_{j=0} \frac{B^j}{j !} \cdot \frac{B^{A_j-1}}{(A_j-1) \cdot \Gamma ( A_j )} \cdot \left( \frac{B}{A_j} + 1 \right) \enspace.
        \end{split}
    \end{align}
    Denoting $J_j = \frac{B^j}{j !} \cdot \frac{ \gamma(A_j, B) }{ \Gamma ( A_j ) }$, the sum of the first and second term is bounded as
    \begin{align}
        e^{-B} \cdot \left( \sum^\infty_{j=0} J_j - \sum^\infty_{j=0} J_{j+1} \right) = e^{-B} \cdot \frac{ \gamma(A_0, B) }{ \Gamma ( A_0 ) } > 0 \enspace.
    \end{align}
    Because
    \begin{align}
        \frac{B^{A_j-1}}{(A_j-1) \cdot \Gamma ( A_j )} \cdot \frac{B}{A_j} 
        &\leq \frac{B^{A_j}}{(j+1) \cdot \Gamma ( A_j + 1)} \quad \text{and}\\
        \frac{B^{A_j-1}}{(A_j-1) \cdot \Gamma ( A_j )} 
        &\leq \frac{B^{A_j-1}}{(j+1) \cdot \Gamma ( A_j)} \enspace,
    \end{align}
    we have
    \begin{align}
        &\sum^\infty_{j=0} \frac{B^j}{j !} \cdot \frac{B^{A_j-1}}{(A_j-1) \cdot \Gamma ( A_j )} \cdot \left( \frac{B}{A_j} + 1 \right) \notag \\
        &\leq \frac{1}{B} \cdot \sum^\infty_{j=0} \frac{B^{j+1}}{(j+1)!} \cdot \frac{B^{A_j}}{\Gamma ( A_j + 1)} + \frac{1}{B} \cdot \sum^\infty_{j=0} \frac{B^{j+1}}{(j+1)!} \cdot \frac{B^{A_j - 1}}{\Gamma ( A_j )} \\
        &= \frac{1}{B}\left( I_{\frac{d}{2}-1}(2B) - \frac{B^{\frac{d}{2} - 1}}{\Gamma (d /2)} \right) + \frac{1}{B}\left( I_{\frac{d}{2}-2}(2B) - \frac{B^{\frac{d}{2} - 2}}{\Gamma (d /2 -1)} \right) \\
        &\leq \frac{1}{B}\left( I_{\frac{d}{2}-1}(2B) + I_{\frac{d}{2}-2}(2B) \right) \enspace.
    \end{align}
    where $I_{n}(x) = \sum^\infty_{j=0} \frac{1}{j! \cdot \Gamma(j + n + 1)} \left( \frac{x}{2} \right)^{2j + n}$ is the modified Bessel function.
    We note that $I_{n}(x) \sim e^x / \sqrt{2 \pi x}$ for $\x \to \infty$~\cite{bessel:apdx}.
    Therefore, 
    \begin{multline}
        \mathbb{E} \left[ \left( 1 - \frac{\lambda}{Z} \right) \cdot \mathbf{1}_{\{ Z \leq \lambda \}} \right] \\
        \geq - \frac{e^{-2B}}{B} \left( I_{\frac{d}{2}-1}(2B) + I_{\frac{d}{2}-2}(2B) \right) \in - O \left( B^{- \frac{3}{2}} \right) \enspace.
    \end{multline}
    This is the end of the proof.
\end{proof}

\subsection{Proofs in Analysis of (1+1)-LB-ES}
 \label{apdx:sec:proof:lb-es}

\begin{proof}[Proof of Proposition~\ref{proposition:drift:sig}]
    On the target problem, the successful update occurs when both of the following are satisfied.
    \begin{itemize}
        \item The generated continuous part $\x_t$ is superior to the current elitist, i.e., $\| \x_t \|^2 \leq \| \m[t] \|^2 $.
        \item The generated integer part $\z_t$ is the same as the optimal integer. i.e., $\z_t = \mathbf{0}$.
    \end{itemize}
    According to Lemma~3.1 in~\cite{akimoto:gecco:2018}, the success probability in the continuous part is bounded as 
    \begin{align}
        \Pr \left( \| \x_{t} \|^2 \leq \| \m[t] \|^2 \mid \mathcal{F}_t \right) \leq \frac{1}{2} \enspace.
        \label{eq:succ:prob:cont}
    \end{align}
    On the other hand, the success probability in the integer part is given by
    \begin{align}
        \Pr \left( \z_t = \mathbf{0} \mid \mathcal{F}_t \right) = \left( 1 - 2 \Phi\left( - \frac{1}{2 \sig[t] \langle \D[t] \rangle} \right) \right)^{\din} \enspace. 
        \label{eq:succ:prob:int}
    \end{align}
    In the following, we denote the number of successful updates between the $t$-th and $(t+s-1)$-th iterations as $N_{t,s}^{\mathrm{succ}}$.
    We note that the difference $\log (\sig[t+s]) - \log (\sig[t])$ can be written as
    \begin{align}
        \log (\sig[t+s]) - \log (\sig[t]) = \left( N_{t,s}^{\mathrm{succ}}  - \frac{s - N_{t,s}^{\mathrm{succ}}}{s-1} \right) \log \alpha \enspace.
        \label{eq:sigma:Nsucc}
    \end{align}
    We note $\sig[t] = \sig[t+s]$ when $N_{t,s}^{\mathrm{succ}} = 1$.
    We will consider two events, $N_{t,s}^{\mathrm{succ}} \geq 2$, and $N_{t,s}^{\mathrm{succ}} = 0$, separately.
    
    When $N_{t,s}^{\mathrm{succ}} \geq 2$, there is at least one successful update with the step-size satisfying $\sig[t] \langle \D[t] \rangle \geq \sigLB \cdot \alpha^{\frac{1}{1-s}}$ in the $s$ updates.
    Because RHS in~\eqref{eq:succ:prob:int} is monotonically decreasing w.r.t. $\sig[t] \langle \D[t] \rangle$, we have
    \begin{multline}
        \Pr\left( N_{t,s}^{\mathrm{succ}} \geq 2 \mid \mathcal{F}_t \right) 
        \leq s \cdot \left( 1 - 2 \Phi\left( - \frac{1}{2 \sigLB \cdot \alpha^{\frac{1}{s-1}}} \right) \right)^{\din} \enspace. 
    \end{multline}
    Because $2 \Phi\left( - {1}/{(2 \sigLB) } \right) = \new{\pinmut}$, Propositions~\ref{proposition:cdf-ratio} and~\ref{proposition:cdf-bound} show
    \begin{align*}
        2 \Phi\left( - \frac{1}{2 \sigLB \cdot \alpha^{\frac{1}{s-1}}} \right) 
        &> \new{\pinmut} \cdot \exp\left(  \gamma \cdot \left(\Phi^{-1} \left( \new{\frac{\pinmut}{2}} \right)\right)^2 \right) \\
        &\geq \new{\pinmut} \cdot \left( \sqrt{\frac{e}{2\pi}} \cdot \new{\frac{1}{\pinmut}} \right)^{\gamma} \enspace.
    \end{align*}
    Therefore, assuming $\new{\pinmut \in \Theta(1/\din)}$, we have
    \begin{align}
        \Pr\left( N_{t,s}^{\mathrm{succ}} \geq 2 \mid \mathcal{F}_t \right) 
        &\leq s \cdot \left( 1 - \new{\pinmut}  \left( \sqrt{\frac{e}{2\pi}} \cdot \new{\frac{1}{\pinmut}} \right)^{\gamma} \right)^{\din} \notag \\
        &\in O \left( \din^{- \gamma} \right) \enspace.
        \label{eq:Nsucc:2:bound}
    \end{align}

    Next, we consider the probability of the event $N_{t,s}^{\mathrm{succ}} = 0$. 
    Because of the upper bound for success probability for the continuous part in~\eqref{eq:succ:prob:cont}, we have
    \begin{align}
        \Pr\left( N_{t,s}^{\mathrm{succ}} = 0 \mid \mathcal{F}_t \right) \geq \frac{1}{2^s} \enspace.
        \label{eq:Nsucc:0:bound}
    \end{align}
    Finally, combining~\eqref{eq:sigma:Nsucc}, \eqref{eq:Nsucc:2:bound}, and \eqref{eq:Nsucc:0:bound} with $N_{t,s}^{\mathrm{succ}} \leq s$ shows the upper bound of the drift in~\eqref{eq:drift:bound:sig}.
    This is the end of the proof.
\end{proof}


\begin{proof}[Proof of Proposition~\ref{proposition:var:bound}]
    For two random variables $X,Y$, we have
    \begin{align}
        \mathrm{Var} [X - Y] &= \mathrm{Var} [X] - 2 \mathrm{Cov} [X, Y] + \mathrm{Var} [Y] \\
        | \mathrm{Cov} [X, Y] | &\leq \sqrt{ \mathrm{Var} [X] \mathrm{Var} [Y] } \enspace.
    \end{align}
    Therefore, we have
    \begin{align}
        \mathrm{Var} \left[ \log\left( \frac{\| \m[t+s] \|}{\sig[t+s]} \right) \mid \mathcal{F}_t \right] \leq \left( \sqrt{ V_{\m[],t} } + \sqrt{ V_{\sig[],t} } \right)^2 \enspace.
    \end{align}
    where $V_{\m[],t} = \mathrm{Var} \left[ \log (\| \m[t+s] \|) \mid \mathcal{F}_t \right]$ and $V_{\sig[],t} = \mathrm{Var} \left[ \log (\sig[t+s]) \mid \mathcal{F}_t \right]$.
    Because the step-size updated $s$ times is bounded as $\alpha^{- \frac{s}{s-1}} \sig[t] \leq \sig[t+s] \leq \alpha^s \sig[t]$, its logarithm is bounded as $\ell_{\sig[],t} \leq \log(\sig[t+s]) \leq u_{\sig[],t}$, where
    \begin{align}
        \ell_{\sig[],t} &:= - \frac{s}{s-1} \log(\alpha) + \log(\sig[t]) \quad \text{and} \\
        u_{\sig[],t} &:= s \log(\alpha) + \log(\sig[t]) \enspace.
    \end{align}
    Then, from Popoviciu's inequality, we obtain
    \begin{align}
        V_{\sig[],t} &\leq \frac14 \left( u_{\sig[],t} - \ell_{\sig[],t}  \right)^2 = \frac14 \left( \frac{s^{\rev{2}}}{s-1} \log (\alpha) \right)^2 \enspace.
    \end{align}

    Next, we consider the variance of $ \log( \| \m[t+s] \|)$. Because $\m[t]$ is $\mathcal{F}_t$-measurable, we can decompose the variance as
    \begin{align}
        V_{\m[],t} &= 
        \mathrm{Var} \left[ \log (\| \m[t+s] \|) \mid \mathcal{F}_t \right] \\
        &= \mathrm{Var} \left[ \log (\| \m[t+s] \|) - \log (\| \m[t] \|) \mid \mathcal{F}_t \right] \\
        &= \mathbb{E} \left[ \left( R^{t+s}_{t} \right)^2 \mid \mathcal{F}_t \right] - \left( \mathbb{E} \left[ R^{t+s}_{t} \mid \mathcal{F}_t \right] \right)^2 \enspace,
    \end{align}
    where we denote $R^{a}_{b} = \log \left( \frac{ \| \m[a] \| }{ \| \m[b] \| } \right)$ for short.
    Because the second term is negative, bounding the first term shows the claim. For $s' \in \mathbb{N}$, we have
    \begin{multline}
        \left( R^{t+s'}_{t} \right)^2 
        = \left( R^{t+s'}_{t+s'-1} \right)^2 \\
         + 2 R^{t+s'}_{t+s'-1} \cdot \underbrace{ R^{t+s'-1}_{t} }_{\text{$\mathcal{F}_{t+s'-1}$-measurable}} + \underbrace{\left( R^{t+s'-1}_{t} \right)^2}_{\text{$\mathcal{F}_{t+s'-1}$-measurable}} ~.
         \label{apdx:eq:R2}
    \end{multline}
    If $s'\geq2$, from the tower property and Proposition~\ref{proposition:drift:m}, the expectation of the second term is bounded as follows, noting $\| \m[t+s'] \| \leq \| \m[t+s'-1] \| \leq \| \m[t] \|$.
    \begin{align}
        &\E\left[ 2  R^{t+s'}_{t+s'-1} \cdot R^{t+s'-1}_{t} \mid \mathcal{F}_{t} \right] \notag \\
        &\qquad = 2\E\left[ \E\left[ R^{t+s'}_{t+s'-1} \cdot R^{t+s'-1}_{t} \!\mid\! \mathcal{F}_{t+s'-1} \right] \!\mid\!  \mathcal{F}_{t} \right] \\
        &\qquad = 2\E\left[\E\left[ R^{t+s'}_{t+s'-1} \!\mid\! \mathcal{F}_{t+s'-1} \right] R^{t+s'-1}_{t} \! \mid\! \mathcal{F}_{t} \right] \\
        &\qquad \leq 2\E\left[ - \frac{1}{\dco} \cdot R^{t+s'-1}_{t} \mid \mathcal{F}_{t} \right] \\
        &\qquad \leq \frac{2}{\dco} \cdot \frac{s'-1}{\dco} \label{eq:second_exp} \enspace.
    \end{align}
    If $s'=1$, the second term in RHS of \eqref{apdx:eq:R2} becomes $0$, which allows us to derive the upper bound of the expected value of the second term as \eqref{eq:second_exp}.
    The expectation of the first term in RHS of \eqref{apdx:eq:R2} can be obtained in the similar way as in the proof of \cite[Lemma~4.4]{akimoto:gecco:2018}.
    Let us consider the mutation direction $\boldsymbol{\delta}_t = \x_t - \m[t]$ and the optimal length $\gamma^\ast = \argmin_\gamma \| \m[t] + \gamma \boldsymbol{\delta}_t \|$ for $\boldsymbol{\delta}_t$ to produce $\x^\ast_t = \m[t] + \gamma^\ast \boldsymbol{\delta}_t$.
    Considering that $\| \m[t+1] \| \leq \| \m[t]  \|$ and $(\log(x))^2$ is monotonically decreasing for $0 < x \leq 1$, the squared log-progress is bounded as
    \begin{align}
        \left( \log (\| \m[t+1] \|) - \log (\| \m[t] \|) \right)^2 
        & \leq \left( \log (\| \x^\ast_t \|) - \log (\| \m[t] \|) \right)^2 \\
        & = \left( \log \left( \left\| \boldsymbol{e}_m + \frac{\gamma^\ast}{\| \m[t] \| } \boldsymbol{\delta}_t \right\| \right) \right)^2 \enspace,
    \end{align}
    where $\boldsymbol{e}_m = \m[t] / \| \m[t] \|$.
    The squared log-progress with the optimal length amounts to $(\log (\sin (\theta)))^2 \mathbf{1}_{\{ \theta \leq \pi / 2 \}}$, where $\theta \in [0, \pi)$ is the angle between $\boldsymbol{\delta}_t$ and $\boldsymbol{e}_m$.
    \rev{Because the mutation direction is isotropic, $\theta$ has density} $(2 W_{\dco-2})^{-1} | \sin (\theta) |^{\dco-2}$, where $W_d = \int^{\pi/2}_0 \sin^d (\theta) \mathrm{d} \theta$ denotes the Wallis integral.
    Then, the expectation of the squared log-progress is given by
    \begin{multline}
        \frac{1}{2 W_{\dco-2}} \int^{\pi /2}_0 \left( \log (\sin (\theta)) \right)^2 \sin^{\dco-2} (\theta) \mathrm{d} \theta \\
        = \left( 2 (\dco-1)^2 W_{\dco-2} \right)^{-1} \int^1_0 \frac{(\log (r))^2}{ \sqrt{1 - r^{\frac{2}{\dco-1}}} } \mathrm{d} r \enspace.
    \end{multline}
    Here we applied the change of variable $\sin^{\dco-1} (\theta) = r$. 
    When considering $r$ as a random variable generated form the uniform distribution on [0,1], $\log(r)$ and $1 / \sqrt{1 - r^{\frac{2}{\dco-1}}}$ are positively correlated~\cite[Chapter~1]{corrlation:book:2000}, which shows $(\log (r))^2$ and $1 / \sqrt{1 - r^{\frac{2}{\dco-1}}}$ are negatively correlated. Therefore, we have
    \begin{align}
        \int^1_0 \frac{(\log (r))^2}{ \sqrt{1 - r^{\frac{2}{\dco-1}}} } \mathrm{d} r 
        & \leq  \int^1_0 (\log (r))^2 \mathrm{d} r \cdot \int^1_0 \sqrt{\frac{1}{1 - r^{\frac{2}{\dco-1}}}} \mathrm{d} r \\
        & = 2 (\dco - 1) W_{\dco - 2} \enspace.
    \end{align}
    Therefore, by iteratively computing the expectation from $s'=s$ to $s'=1$, we have
    \begin{align}
        V_{\m[],t} &\leq \mathbb{E} \left[ \left( R^{t+s}_{t} \right)^2 \! \mid \mathcal{F}_{t} \right]  \\
        &\leq \sum^s_{s'=1} \left( 2 (\dco - 1) W_{\dco\! - 2} \! + \frac{2}{\dco} \cdot \frac{s'-1}{\dco} \right) \\
        &= 2 s (\dco - 1) W_{\dco - 2} + \frac{ (s-1) \cdot s}{\dco^2} \enspace.
    \end{align}
    This is the end of the proof.
    
\end{proof}

\subsection{Proofs in Analysis of (1+1)-LUB-ES}
 \label{apdx:sec:proof:lub-es}

\begin{proof}[Proof of Proposition~\ref{proposition:integer-mutation:drift}]
    Because Proposition~\ref{proposition:integer-mutation:decrease} bounds the change of $\log (\sig[t] \langle \D[t] \rangle)$, we have
    \begin{align*}
        &\max\{ V(\theta_{t+1}) - V(\theta_{t}), -A \} \\
        &\quad\leq  \max \left\{ V_{\mathrm{co}}(\theta_{t+1}) - V_{\mathrm{co}}(\theta_{t}) , -A_{\mathrm{co}} \right\} + v_{\mathrm{in}} \cdot \log \left( \frac{ \sig[t+1] \langle \D[t+1] \rangle }{ \sig[t] \langle \D[t] \rangle } \right) \enspace.
    \end{align*}
    We note that (1+1)-LB-ES generates $\x_t$ and $\z_t$ independently, and that the updates of $\m$ and $\sig$ are performed in the same way as (1+1)-ES when $\z_t = \mathbf{0}$. 
    Akimoto~et~al.~\cite[Proposition~4.2]{akimoto:gecco:2018} show an upper bound of the truncated drift for $V_{\mathrm{co}}$ with (1+1)-ES as
    \begin{align}
    \begin{split}
        \mathbb{E}_{\mathrm{ES}}&\left[ \max \left\{ V_{\mathrm{co}}(\theta_{t+1}) - V_{\mathrm{co}}(\theta_{t}) , -A_{\mathrm{co}} \right\} \mid \mathcal{F}_t \right]  \\
        &\quad \leq - \left( A_\mathrm{co} \cdot p^\ast - \frac{s\cdot v \cdot \log (\alpha)}{s-1}  \right) 
        \cdot \mathbf{1}_{ \left\{ \frac{d_\mathrm{co} \cdot \sig}{\|\m\| } \in [\ell, u] \right\} } \\
        &\qquad - \frac{s \cdot p_\ell - 1}{s-1} \cdot v \cdot \log (\alpha) 
        \cdot \mathbf{1}_{ \left\{ \frac{d_\mathrm{co} \cdot \sig}{\|\m\| } < \ell \right\} } \\
        &\qquad - \frac{1 - s \cdot p_u}{s-1} \cdot v \cdot \log (\alpha) 
        \cdot \mathbf{1}_{ \left\{ \frac{d_\mathrm{co} \cdot \sig}{\|\m\| } > u \right\} } \enspace. \label{eq:EV-cond}
    \end{split}
    \end{align}
    When $\z_t \neq \mathbf{0}$, it holds $\m[t+1] = \m$ and $\sig[t+1] = \alpha^{- \frac{1}{s-1}} \cdot \sig$ on \obj{}.
    Therefore, we have
    \begin{multline}
        \left( V_{\mathrm{co}}(\theta_{t+1}) - V_{\mathrm{co}}(\theta_{t}) \right) \cdot \mathbf{1}_{ \{ \z_t \neq \mathbf{0} \} } \\
        \leq \frac{v \cdot \log(\alpha)}{s-1} \cdot \mathbf{1}_{ \left\{ {d_\mathrm{co} \cdot \sig} / {\|\m\| } < \ell \right\} } \cdot \mathbf{1}_{ \{ \z_t \neq \mathbf{0} \} } \enspace. \label{eq:V-cond}
    \end{multline}
    \rev{
    We distinguish the two cases $\sig[t] \langle \D[t] \rangle > \sigLB$ and $\sig[t] \langle \D[t] \rangle = \sigLB$.
    If $\sig[t] \langle \D[t] \rangle > \sigLB$, the event $\z_t = \mathbf{0}$ occurs with probability $O \left( \din^{- \gamma} \right)$ by \eqref{eq:psucc:in:cases}. This yields the bound corresponding to $B_1$.
    Then, if $\sig[t] \langle \D[t] \rangle = \sigLB$, the second term of $V(\theta_t)$ is zero, so it suffices to bound the drift of $V_{\mathrm{co}}$.
    Conditioning on $\z_t = \mathbf{0}$ and $\z_t \neq \mathbf{0}$, and applying \eqref{eq:EV-cond} and \eqref{eq:V-cond}, we obtain the three subcases according to $\frac{d_\mathrm{co} \cdot \sig}{\|\m\| }$: $[\ell, u]$, $<\ell$, and $>u$, which yield the bounds $B_2$, $B_3$, and $B_4$, respectively.
    }
    Therefore, we have 
    \begin{multline*}
        \mathbb{E}\left[ \max \left\{ V(\theta_{t+1}) - V(\theta_{t}) , -A \right\} \mid \mathcal{F}_t \right] \\
        \leq - B_1 \cdot \mathbf{1}_{\{ \sig[t] \langle \D[t] \rangle > \sigLB \}}  - \min\{ B_2, B_3, B_4 \} \cdot \mathbf{1}_{\{ \sig[t] \langle \D[t] \rangle = \sigLB \}} \enspace.
    \end{multline*}
    This is the end of the proof.
\end{proof}

\section{Proof of Main Theorems} \label{apdx:proofs-main}
This section shows the proofs of our main theorems.

\begin{proof}[Proof of Theorem~\ref{theorem:premature:logm}]

    Combining Propositions~\ref{proposition:chi-squared:inverse:trunc} and~\ref{proposition:ratio:as} shows~\eqref{eq:mlog:lower}.
    Next, we show the existence of $C_\mathrm{lower} > 0$.
    Because it holds $\|\m[t+i]\| \leq \| \m[t] \| $ and $\sig[t+i] \geq \sig[t] \cdot \alpha^{- \frac{i}{s-1}}$, we have $\| \m[s \tau + i] \| / \sig[s \tau + i] \in \Theta(\| \m[s \tau] \| / \sig[s \tau]) = \Omega \left( \exp( \tau ) \right)$ for $i \in O(1)$. Then, 
    \begin{align*}
        \mathbb{E} \left[ \log \left( \frac{ \| \m[s \tau + s] \| }{ \| \m[0] \| } \right) \right] 
        & = \mathbb{E} \left[ \sum_{k=0}^\tau \sum_{j=0}^{s-1} \log \left( \frac{ \| \m[s k + j+1] \| }{ \| \m[s k + j] \| } \right) \right] \\
        & = \mathbb{E} \left[ \sum_{k=0}^\tau \sum_{j=0}^{s-1} \mathbb{E} \left[ \log \left( \frac{ \| \m[s k + j+1] \| }{ \| \m[s k + j] \| } \right) \mid \mathcal{F}_{s k + j} \right] \right] \\
        & = \sum_{k=0}^\tau s \cdot - O\left( \exp \left( - 3 k \right) \right)
        \enspace.
    \end{align*}
    Therefore, we have $\mathbb{E} \left[ \log (\| \m[s \tau + s] \| ) \right] \in - O(1) + \log( \| \m[0] \| )$.
    Finally, because $\log (\E\left[ X \right]) \geq \E\left[ \log (X) \right]$, the proof is finished.
\end{proof}

\begin{proof}[Proof of Theorem~\ref{theorem:integer-mutation:ert}]
    We first consider $B_2, B_3$, and $B_4$ one by one. Then, we have
    \begin{align}
        B_2 &= \left( A_\mathrm{co} \cdot p^\ast - \frac{s}{s-1} \cdot v \cdot \log(\alpha) \right) \cdot p^{\mathrm{succ}}_{\mathrm{in,LB}} \\
        &= \left( \frac{p^\ast}{\dco} - \frac{s}{s-1} \cdot \frac{p'}{2 \cdot \dco} \right) \cdot p^{\mathrm{succ}}_{\mathrm{in,LB}} \\
        B_3 &= \frac{s \cdot p_{\ell} \cdot p^{\mathrm{succ}}_{\mathrm{in,LB}} - 1}{s-1} \cdot v \cdot \log(\alpha) \\
        &= \frac{s \cdot p_{\ell} \cdot p^{\mathrm{succ}}_{\mathrm{in,LB}} - 1}{s-1} \cdot \frac{p'}{2 \cdot \dco} \\
        B_4 &= \frac{1 - s \cdot p_u}{s-1} \cdot v \cdot \log(\alpha) \cdot \rev{ p^{\mathrm{succ}}_{\mathrm{in,LB}}}\\
        &= \frac{1 - s \cdot p_u}{s-1} \cdot \frac{p'}{2 \cdot \dco} \cdot \rev{ p^{\mathrm{succ}}_{\mathrm{in,LB}}} \enspace.
    \end{align}
    Because $v < \frac{1}{\dco \cdot \log (\alpha)}$, it holds $r' > r^\ast$ and $p' < p^\ast$\rev{, where $r^\ast$ and $p^\ast$ are defined as in Proposition~\ref{proposition:integer-mutation:drift}}. Therefore, we have $L \leq \min\{B_2, B_3, B_4 \} \leq U$, where
    \begin{align}
        L &= \frac{p'}{\dco} \cdot c_{\min} \quad \text{and} \quad U = \frac{p^\ast}{\dco} \cdot c_{\min}
    \end{align} 
    defined with
    \begin{align*}
        c_{\min} = \min \left\{ \frac{(s - 2)\cdot p^{\mathrm{succ}}_{\mathrm{in,LB}}}{2 (s-1)}, \frac{s \cdot p_{\ell} \cdot p^{\mathrm{succ}}_{\mathrm{in,LB}} - 1}{2 (s-1)}, \frac{(1 - s \cdot p_u) \cdot \rev{ p^{\mathrm{succ}}_{\mathrm{in,LB}}}}{2 (s-1)} \right\} .
    \end{align*} 
    Since $\lim_{\dco \to \infty} \dco \cdot r^\ast = 1$ and $\lim_{\dco \to \infty} \dco \cdot r' = 1$, \citet[Lemma~3.2]{akimoto:gecco:2018} show
    \begin{align}
        \lim_{\dco \to \infty} p^\ast &= \min \left\{ \Phi \left( - \frac{1}{\ell_\mathrm{lim}} - \frac{\ell_\mathrm{lim}}{2} \right), \Phi \left( - \frac{1}{u_\mathrm{lim}} - \frac{u_\mathrm{lim}}{2} \right) \right\} > 0 \\
        \lim_{\dco \to \infty} p' &= \min_{\bar{\sigma} \in [\ell_\mathrm{lim}, u_\mathrm{lim}]} \left\{ \Phi \left( - \frac{1}{\bar{\sigma}} - \frac{\bar{\sigma}}{2} \right) \right\} > 0 \enspace,
    \end{align} 
    where the detail is found in the proof of \cite[Proposition~4.3]{akimoto:gecco:2018}, and $\ell_\mathrm{lim}$ and $u_\mathrm{lim}$ are limit values of $\ell \in \Theta(1)$ and $u \in \Theta(1)$, respectively.
    Therefore, when $s > 2 / p^{\mathrm{succ}}_{\mathrm{in,LB}}$, we have $U, L \in O(1/\dco)$ for some $0 < p_u < 1/s < p_{\ell} < 1 / 2$, which shows $\min\{B_2, B_3, B_4 \} \in O(1/\dco)$.

    Next, we rewrite $B_1$ as
    \begin{multline}
        B_1 = \frac{p'}{2 \dco \cdot (s-1)} \cdot \left( 1 - O \left( \din^{- \gamma} \right) \right) \\
        - \min\{B_2, B_3, B_4 \} \cdot O \left( \din^{- \gamma} \right) 
    \end{multline}
    Because $p'$ and $\min\{B_2, B_3, B_4 \}$ are independent from $\din$, it is obvious that there exists $D_1 \geq D_p$ such that $B_1$ satisfies $B_1 > 0$ and $B_1 \in O(1/\dco)$ for all $\din \geq D_1$.

    The second claim is shown using $\log (\| \m[T] \|) \leq V(\theta_T)$ and 
    \begin{align*}
        \mathbb{E}[ V(\theta_T) ] - V(\theta_0) 
        &= \sum^{T-1}_{i=0} \mathbb{E}\left[ V(\theta_{i+1}) - V(\theta_{i}) \right] \\
        & \leq \sum^{T-1}_{i=0} \mathbb{E}\left[ \mathbb{E}\left[ \max \{ V(\theta_{i+1}) - V(\theta_{i}), -A \} \mid \mathcal{F}_{i} \right] \right] \\
        &\leq - B \cdot T \enspace.
    \end{align*}

    Finally, we show the last claim. Applying Theorem~\ref{theorem:truncated-drift} shows an upper bound of the expected hitting time $T^V_{\epsilon} = \min\{t: V(\theta) \leq \log (\epsilon) \}$ for $V(\theta_t)$ as
    \begin{align}
        \mathbb{E}\left[ T^V_\epsilon \right] \leq \frac{V(\theta_0) - \log(\epsilon) + \frac{1}{\dco} + \frac{p'}{\dco \cdot (s-1)}}{B} \enspace,
    \end{align}
    where we rewrite $A$ in Theorem~\ref{theorem:truncated-drift} as $A = A_\mathrm{co} + \frac{v_\mathrm{in}}{s-1} \log (\alpha) = \frac{1}{\dco} + \frac{p'}{\dco \cdot (s-1)}$.
    Because $\log( \| \m \|) \leq V(\theta_t)$, we have $T_\epsilon \leq T^V_\epsilon$. 
    Since $\Phi^{-1}(1/x) \sim - \sqrt{2 \log(x)} $ as $x \to \infty$~\cite{dominici:arxiv:2003}, we have $\sigLB \in \Theta \bigl( ( \log( \new{ 1/\pinmut } )^{- \frac{1}{2}} \bigr) $, which shows 
    \begin{align}
        V(\theta_0) \in \Theta( V_\mathrm{co}(\theta_0) + \log ( \log( \new{ 1/\pinmut } ) ) ) \enspace.
    \end{align}
    Assuming $\pinmut \in O(1/\dco)$, we obtain $V(\theta_0) \in \Theta( V_\mathrm{co}(\theta_0) + \linebreak \log ( \log( \dco ) ) )$.
    Finally, the lower bound follows by upper-bounding the one-step log-progress by the corresponding hit-and-run progress, as in \cite[Lemma~4.4]{akimoto:gecco:2018}, and applying \cite[Theorem~2.3]{akimoto:gecco:2018}.
    Therefore, $\mathbb{E}\left[ T_\epsilon \right] \in \Omega(\dco \cdot \log(1/\epsilon))$.
    This is the end of the proof.
\end{proof}



\end{document}